\documentclass{article}

\usepackage[preprint]{neurips_2026}

\usepackage[utf8]{inputenc} 
\usepackage[T1]{fontenc}    
\usepackage{hyperref}       
\usepackage{url}            
\usepackage{booktabs}       
\usepackage{amsfonts}       
\usepackage{nicefrac}       
\usepackage{microtype}      
\usepackage{xcolor}         

\usepackage{graphicx}
\usepackage{amsmath,amsthm}
\usepackage{amssymb}
\usepackage{xcolor}
\usepackage{makecell}
\usepackage{enumitem}
\usepackage{tikz}
\usepackage{bm} 
\usepackage{amsmath} 
\usepackage{amssymb} 
\usepackage{amsthm} 

\newtheorem{theorem}{Theorem}[section]
\newtheorem{lemma}{Lemma}[section]
\newtheorem{definition}{Definition}[section]
\newtheorem{problem}{Problem}[section]

\DeclareMathOperator{\diag}{diag}
\DeclareMathOperator{\poly}{poly}
\DeclareMathOperator{\softmax}{softmax}
\DeclareMathOperator{\Att}{Att}

\newcommand\veck{\boldsymbol{\mathrm{k}}}
\newcommand\vecq{\boldsymbol{\mathrm{q}}}
\newcommand\vecu{\boldsymbol{\mathrm{u}}}
\newcommand\vecv{\boldsymbol{\mathrm{v}}}

\newcommand\matA{\boldsymbol{\mathrm{A}}}
\newcommand\matB{\boldsymbol{\mathrm{B}}}
\newcommand\matC{\boldsymbol{\mathrm{C}}}
\newcommand\matD{\boldsymbol{\mathrm{D}}}
\newcommand\matH{\boldsymbol{\mathrm{H}}}
\newcommand\matK{\boldsymbol{\mathrm{K}}}
\newcommand\matM{\boldsymbol{\mathrm{M}}}
\newcommand\matQ{\boldsymbol{\mathrm{Q}}}
\newcommand\matU{\boldsymbol{\mathrm{U}}}
\newcommand\matV{\boldsymbol{\mathrm{V}}}

\newcommand\matAtilde{\widetilde{\boldsymbol{\mathrm{A}}}}

\newtheorem{keylemma}{Key Lemma}[section]
\newtheorem{mylemma}[keylemma]{Lemma}

\title{\hspace{-3pt}Approaching I/O-optimality for Approximate Attention\hspace{-3pt}}
\date{}
\author{
  P\'al Andr\'as Papp\thanks{Contacts: pal.andras.papp@huawei.com,  aleksandros.sobczyk@h-partners.com,  anastasios.zouzias@huawei.com} 
  \\
  \And
  Aleksandros Sobczyk
  \\
  \\
  Computing Systems Lab \\ Huawei Technologies \\
  Zurich, Switzerland \vspace{-2pt}
  \And
  Anastasios Zouzias
}

\begin{document}

\maketitle

\begin{abstract}
    We revisit the I/O complexity of \emph{attention} in large language models. Given query--key--value matrices $\matQ,\matK,\matV\in\mathbb{R}^{n\times d}$, and a machine with fast memory size $M$, the goal is to compute the ``attention matrix'' $\matA=\softmax(\matQ\matK^{\top}/\sqrt{d})\matV$ with the minimal number of data transfers between fast and slow memory. Existing methods in the literature, most notably FlashAttention and its variants, incur an I/O cost that depends quadratically on $n$, while a trivial lower bound only requires $\Omega(nd)$ I/O's to read the inputs and write the output. In this work, we present a technique for computing attention where the I/O cost only depends almost-linearly on $n$ in most parameter regimes. This is achieved by developing I/O-efficient algorithms inspired by the recent approximate attention framework of Alman and Song~\cite[NeurIPS'23]{alman2023fast}. We also prove corresponding lower bounds in each parameter regime to show that our algorithms are indeed close to I/O-optimal.
\end{abstract}

\section{Introduction}
Since its discovery \cite{vaswani2017attention}, the attention mechanism has reshaped the entire field of machine learning, and, in particular, Large Language Models (LLMs). Given a sequence of tokens of length $n$, and a feature dimension $d$,
the attention mechanism takes as input three matrices $\matQ\in\mathbb{R}^{n\times d}$ (query), $\matK\in\mathbb{R}^{n\times d}$ (key), and $\matV\in\mathbb{R}^{n\times d}$ (value), and returns the \emph{attention matrix}:
\begin{align}
\matA = \Att(\matQ,\matK,\matV):=\matD^{-1}\exp(\matM)\matV,
\label{eq:attention_matrix}
\end{align}
where $\matM=\tfrac{1}{\sqrt{d}}\matQ\matK^{\top}$, $\exp(\matM)$ is element-wise exponentiation, and $\matD$ is a diagonal matrix with $\matD_{i,i}=\sum_{j=1}^n \exp(\matM_{i,j})$ (in the literature $\matD^{-1}\exp(\matM)$ is often written as $\softmax(\tfrac{1}{\sqrt{d}}\matQ\matK^{\top})$).

The straightforward algorithm, which first computes the matrix $\matQ\matK^{\top}$ using standard matrix multiplication, has an arithmetic complexity of 
$O(n^2d)$.
In the so-called ``long-context'' regime, where $n$ can grow very large, the quadratic complexity often forms a computational bottleneck.
Overcoming the $O(n^2)$ cost of attention is one of the most interesting computational challenges of the last decade. Indeed, the literature on fast and efficient attention algorithms is quite extensive, where common techniques to reduce the arithmetic intensity include approximation algorithms \cite{daras2020smyrf,han2024hyperattention,choromanski2021rethinking,kitaev2020reformer,xiong2021nystromformer}, sparsification \cite{child2019generating,zaheer2020big}, or replacing the softmax with a kernel function \cite{katharopoulos2020transformers}. A particularly important work on approximate attention is the recent algorithm by Alman and Song~\cite{alman2023fast}, which also comes with strong theoretical bounds on the approximation guarantee.

The aforementioned works can considerably reduce the arithmetic complexity of attention. However, in modern systems, the computational runtimes also depend on several other aspects. One particularly important factor is the amount of data movements within the memory hierarchy. 
Due to this, there is a line of research analyzing computations with respect to their \emph{I/O complexity} or I/O cost, i.e.\ the data movement they require when executed in a two-level memory hierarchy with a fast memory of limited capacity $M$ (e.g.\ cache) and a slow memory of unlimited capacity (e.g.\ RAM). For attention, if we simply use the I/O-optimal method for standard matrix multiplication, we obtain an I/O cost of $O(\frac{n^2 \cdot d}{\sqrt{M}})$. However, the celebrated FlashAttention algorithm by Dao et al~\cite{dao2022flashattention}, and follow-up works \cite{dao2024flashattention2,shah2024flashattention3}, allow to execute the same algorithm with an I/O cost of only $O(\frac{n^2 \cdot d^2}{M})$ by fusing the two multiplications and re-arranging the order of execution. This is indeed an improvement over the naive method when $M > d^2$, i.e.\ the cache size is sufficiently large, which is often the case in practice. The work of Saha and Ye~\cite{saha2024complexity} also proves tight (conditional) lower bounds for both of these cases up to constant factors, showing that these algorithms (naive matrix multiplication and FlashAttention) are indeed I/O-optimal in the corresponding cache size regimes.

In this work, we combine the two lines of research above: we study the most I/O-efficient way to execute attention if we use the state-of-the-art algorithm from~\cite{alman2023fast}. Our results show that this approximate attention algorithm can significantly outperform classical attention not only in terms of computational, but also in terms of I/O complexity. In particular, its asymptotic I/O cost is also significantly lower than that of the FlashAttention algorithm for classical attention.

\subsection{Contributions}
Our main results are tight and nearly-tight bounds for the I/O complexity of attention when computed using the algorithm of Alman and Song~\cite{alman2023fast}. It turns out to that this I/O complexity can behave rather differently depending on the relations between several parameters: the feature dimension $d$ of the input matrices, the polynomial degree $g$ used to approximate the exponential function in~\cite{alman2023fast}, the size $r={d+g \choose g}$ of the approximation matrices in~\cite{alman2023fast}, and the fast memory capacity $M$. For clarity, the role of these parameters is summarized in Table~\ref{tab:params}. We distinguish the following cases:
\begin{itemize}[leftmargin=1.6em, topsep=1pt, itemsep=4pt]
 \item \textbf{Case I:} when $M = \Omega (d \cdot r)$, i.e.\ the fast memory is so large that it can essentially fit all of intermediate matrix $\matU_2^{\top}V$ from~\cite{alman2023fast}, up to constant factors;
 \item \textbf{Case II:} when $M = o (d \cdot r)$, and also $g = o(\log{M})$, i.e.\ the fast memory is more than exponentially larger than the degree $g$ of the approximating polynomial;
 \item \textbf{Case III:} when $M = o (d \cdot r)$, and $g = \Omega(\log{M})$ but $g = o(\sqrt{M})$, i.e.\ the fast memory is between exponential and quadratic in the degree $g$;
 \item \textbf{Case IV:} when $M = o (d \cdot r)$ and $g = \Omega(\sqrt{M})$, i.e.\ the fast memory is at most quadratic in $g$.
\end{itemize}

We prove different I/O bounds for these cases. In Case I, when the fast memory essentially fits the intermediate matrix $\matU_2^{\top}V$ in~\cite{alman2023fast}, a simpler strategy produces tight bounds up to a constant factor.

\begin{theorem}[Case I.] \label{th:case1}
If $M = \Omega (d \cdot r)$, then the optimal I/O complexity of Approximate Attention is both upper and lower bounded by $\Theta(n \cdot d)$.
\end{theorem}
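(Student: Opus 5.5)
The lower bound $\Omega(n\cdot d)$ is the trivial one. Any algorithm must read the $3nd$ input entries of $\matQ,\matK,\matV$ and write the $nd$ entries of the output $\matAtilde$. In the red-blue pebbling / two-level memory model we may assume that each input entry starts in slow memory and that each output entry must end there. Every nontrivial input entry therefore has to be loaded at least once, and every output entry stored at least once, giving at least $nd$ I/O's. Strictly, one needs the output to depend on essentially all inputs, or one can simply invoke the model convention that inputs must be read. Since $M$ may be much smaller than $nd$, nothing can be pre-resident beyond $M$ entries, so the bound is $\Omega(nd - M)$. This is $\Omega(nd)$ as long as $n$ is a sufficiently large multiple of $M/d$; otherwise everything fits in cache and the cost is still $\Theta(nd)$, because the inputs must be read in and the output written out.

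For the upper bound I recall the Alman--Song structure. One computes $\matU_1=\phi(\matQ)$ and $\matU_2=\phi(\matK)\in\mathbb{R}^{n\times r}$, where $\phi$ maps each row to its $r=\binom{d+g}{g}$ monomials of degree at most $g$, suitably scaled. One then outputs $\matAtilde=\widetilde{\matD}^{-1}\matU_1(\matU_2^{\top}\matV)$ with $\widetilde{\matD}=\diag(\matU_1(\matU_2^{\top}\mathbf{1}))$.

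The algorithm runs in two streaming passes and never materializes $\matU_1$ or $\matU_2$ in slow memory. In the first pass we keep the $r\times d$ matrix $\matH=\matU_2^{\top}\matV$ together with the $r$-vector $\matU_2^{\top}\mathbf{1}$ resident in fast memory, which costs $r(d+1)=O(dr)$ space. We then stream over rows $j=1,\dots,n$: load $\veck_j$ and $\vecv_j$ ($2d$ words), and for each monomial index $\ell\le r$ compute $\phi(\veck_j)_\ell$ on the fly. Each monomial can be produced from the $d$ coordinates in cache without storing the whole vector, for instance by enumerating monomials in an order where each is a previous one times a coordinate, or by simply recomputing it; I/O is unaffected either way. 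We accumulate $\phi(\veck_j)_\ell\vecv_j$ into row $\ell$ of $\matH$, and this pass costs $O(nd)$ I/O's. In the second pass $\matH$ and the normalizer vector stay resident. For each $i$ we load $\vecq_i$ and accumulate $\sum_\ell\phi(\vecq_i)_\ell\matH_{\ell,:}$ in a $d$-word buffer along with the scalar $\sum_\ell\phi(\vecq_i)_\ell(\matU_2^{\top}\mathbf{1})_\ell$. We then divide and write the $d$ outputs, again for $O(nd)$ I/O's in total. The $O(dr)$ words used for $\matH$ need not be written out, and the total I/O is $O(nd)$.

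The main obstacle is constant-factor bookkeeping. Working space is $r(d+1)+O(d)$ words, so we need $M\ge c\,dr$ for a specific constant $c$. The hypothesis $M=\Omega(dr)$ only guarantees $M\ge c'dr$ for some possibly small $c'$. To close this gap I would partition the $r$ monomial rows of $\matH$ into $k=\lceil (2dr)/M\rceil=O(1)$ groups, each fitting in half of fast memory. Both passes are then repeated once per group: the second pass keeps a running partial output and partial normalizer per query, written to and reread from slow memory between groups. This multiplies the I/O by the constant $k$ and still gives $O(nd)$, which matches the lower bound and yields $\Theta(nd)$.
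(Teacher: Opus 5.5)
Your proof is correct. The lower bound is the same input-reading argument the paper uses. The hedge about $\Omega(nd-M)$ is unnecessary: in the red-blue pebble game no red pebbles are present initially, so every entry of $\matQ,\matK,\matV$ must be loaded, which gives $3nd$ directly.

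For the upper bound you rely on the same principle as the paper. Since $dr=O(M)$, a constant number of pieces of $\matH$ each fit in fast memory, and entries of $\matU_1,\matU_2$ are recomputed from loaded rows of $\matQ,\matK$ at no I/O cost. You organize this differently from the paper, however.

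\begin{itemize}
\item \emph{The paper:} it keeps the two multiplications separate, writing $\matH$ to slow memory and reading it back. It uses output tiles of height $\frac{M}{4d}$, so each reload of an $r\times c_0 d$ block of $\matH$ is amortized over many rows. It splits $\matH$ along the feature dimension into $1/c_0$ column blocks, so each output entry is completed within a single tile.
\item \emph{Your route:} you fuse the two products so $\matH$ never leaves fast memory. You stream single rows against a resident block of $\matH$, split along the monomial dimension $r$, and handle the normalizer $\widetilde{\matD}$ explicitly, which the paper leaves implicit.
\end{itemize}

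The cost of your row split is that, when $k>1$, partial output sums and partial normalizers must round-trip through slow memory between groups. This is still $O(knd)=O(nd)$. It is legitimate in the partial-computation pebbling variant the paper adopts, or with a summation tree whose subtrees match your groups.

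The paper's column split avoids saving partial outputs altogether. It also fits directly into the uniform tiling framework (tiles of height $\frac{M}{4w}$) that the paper reuses for Cases II--IV.
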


In Case II, when the degree of the approximating polynomial is small (compared to the fast memory size), the upper bounds we obtain are somewhat more complex.

\begin{theorem}[Case II.] \label{th:case2}
If $M = o (d \cdot r)$ and $g = o(\log{M})$, then the optimal I/O complexity of Approximate Attention is upper bounded by
\[ O\left(\frac{n \cdot r \cdot d \cdot (4e^2)^g}{M^{\frac{g}{g+1}}}\right) \, ,\]
and lower bounded by
\[ \Omega\left(\frac{n \cdot r \cdot d \cdot g}{M^{\frac{g}{g+1}}}\right) \, . \]
\end{theorem}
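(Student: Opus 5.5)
The plan is to treat the Alman--Song computation as a structured chain of products. First form $\matU_1,\matU_2\in\mathbb{R}^{n\times r}$, whose rows are the degree-$g$ monomials of the rows of $\matQ/d^{1/4}$ and $\matK/d^{1/4}$. Then compute $\matU_2^{\top}\matV$ (an $r\times d$ matrix), and finally $\matU_1(\matU_2^{\top}\matV)$ together with the normalizer $\matU_1(\matU_2^{\top}\mathbf{1})$. The dominant cost is that the entries of $\matU_1,\matU_2$ are never stored. Each entry is a product of $g$ coordinates of one input row, indexed by a multiset of size $g$ from $[d]$. The whole problem is therefore a ``tensor-product'' matrix multiplication: each output entry sums, over multisets $S$, a term $\prod_{i\in S}q_i\cdot \prod_{i\in S}k_i \cdot v$.

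\textbf{Upper bound.} I would block over the monomial index space. Partition $[d]$ into groups of size $b$, and consider the monomials all of whose $g$ indices fall in a prescribed tuple of $g$ groups. There are at most $b^g$ such monomials per tuple, and at most $(d/b)^g\cdot(\text{multiset overcount})$ tuples, which is $\le r\cdot(\text{const})^g/b^g$ after comparing $\binom{d+g}{g}$ with $d^g/g!$. For a fixed tuple, the needed input columns of $\matK$ and $\matV$ number only $gb$ per row. We stream rows of $\matK,\matV$ in chunks, keep a $b^g\times d'$ block of the accumulator $\matU_2^{\top}\matV$ in fast memory, and generate the monomials on the fly.

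Choose $b$ so that $b^g\cdot d'$ together with the streamed $O(g b)$-wide row chunks fits in $M$. Then for each accumulator block we pay $O(n\cdot g b)$ reads of $\matK$ plus $O(n d')$ reads of $\matV$. Summing over the $\approx (d/d')\, r(\text{const})^g/b^g$ blocks and balancing $b^{g}d'$ against $M$ gives total I/O $n r d\,(\text{const})^g/M^{g/(g+1)}$. The exponent $g/(g+1)$ arises because the useful work per block scales like $b^g\cdot b$ per I/O volume $b$, i.e. the optimum is $b^{g+1}\approx M$. The second product $\matU_1(\matU_2^{\top}\matV)$ is handled symmetrically, with $\matQ$ rows streamed against the same block decomposition. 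Here the outputs $\matA$ must be accumulated over blocks, so we would keep the roles transposed and store a row chunk of $\matA$ instead. The $(4e^2)^g$ factor is the product of the bounds $\binom{d+g}{g}\ge (d/g)^g$ versus $d^g$, $g!\le$ stuff, and tuple overcounting. The hypothesis $g=o(\log M)$ is what makes these $c^g$ factors lower order, and it guarantees that $M^{1/(g+1)}\gg g$, so blocks are nontrivial.

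\textbf{Lower bound.} I would use a red--blue pebbling / Loomis--Whitney style argument on the computation DAG, restricted to the $\Theta(n r d)$ elementary multiply nodes $q$-monomial $\times$ $k$-monomial $\times v$. Split the execution into segments of $M$ I/Os and bound how many such terms can be produced per segment. A term is determined by a row index, a multiset $S$, and an output column. The relevant projections are onto input entries (row, coordinate) of $\matK$ and $\matQ$, and onto partial sums. The key inequality is a generalized Loomis--Whitney / hypergraph bound: with at most $2M$ values available, the number of multisets of size $g$ generated from $x$ available coordinates is at most $\binom{x}{g}$. This yields at most $O(M\cdot M^{g/(g+1)}/g)$ terms per segment, hence at least $\Omega(n r d g / M^{g/(g+1)})$ I/Os. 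The extra factor $g$ comes from counting the $g$ input coordinates each monomial touches.

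I expect this lower bound to be the main obstacle: the sharing structure among monomials makes the projection inequality nonstandard, and the dependence on the precise accounting of intermediate (partial-monomial) values must be ruled out or charged. I would first try restricting to algorithms in the Alman--Song dataflow, where intermediates are only full monomial products.
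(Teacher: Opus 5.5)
Your upper bound is sound in outline and is essentially the paper's tiling. The paper splits the columns of $\matQ$ into groups of size $w/g$, indexes aggregation tiles by $g$-subsets of groups, and assigns each monomial to the lexicographically first tile that can generate it. Your blocks, indexed by multisets of $g$ groups of size $b$, partition the monomials exactly. With $b\approx w/g$, your balance $b^{g+1}\approx M$ is the paper's choice $w_0=\frac{1}{4e}gM^{1/(g+1)}$. A few cautions on this part:
\begin{itemize}
\item You must count multisets of groups; ordered tuples would cost a factor $g!$ rather than $c^g$.
\item You still have to verify side conditions such as $gb\le d$ and $c\le 4e^2$. The paper gets $w_0\le d$ from $w_0\tau(w_0)\le M/4$ and $d\tau(d)=\omega(M)$.
\item $g=o(\log M)$ gives only $M^{1/(g+1)}\to\infty$, which is what $w_0\ge g$ needs. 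It does not give $M^{1/(g+1)}\gg g$.
\end{itemize}

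The lower bound, however, is not established. The sentence ``this yields at most $O(M\cdot M^{g/(g+1)}/g)$ terms per segment'' is precisely the missing argument, and your inequality cannot deliver it. First, from $x$ available entries of a row of $\matQ$ one can compute $\tau(x)=\binom{x+g}{g}$ entries of $\matU_1$ (all monomials of degree $\le g$, with repetition), not $\binom{x}{g}$. Second, and more importantly, a per-row monomial count bounds nothing by itself: a row with all $d$ coordinates available yields all $r$ entries of $\matU_1$. You must also use that at most $2M$ entries of $\matH$ are available, and that projection is absent from your list.

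The missing idea is the paper's threshold dichotomy inside each class of a $2M$-partition (equivalently, each segment of $M$ I/Os). Fix any $w$ with $w\tau(w)\le M/4$. For each row $i$, let $w_i$ be the number of dominator-set nodes in row $i$ of $\matQ$ and of $\matU_1$.
\begin{itemize}
\item Rows with $w_i\ge w$ number at most $2M/w$. Each contains at most one uncovered product $u_{i,S}H_{S,c}$ per dominator node in $\matH$, giving at most $4M^2/w$.
\item In rows with $w_i<w$, at most $\tau(w_i)\le\tau(w)\le M/(4w)$ entries of $\matU_1$ are usable. So each of the at most $2M$ output entries whose summation trees meet the minimum set has at most $M/(4w)$ uncovered predecessors, giving at most $M^2/(2w)$.
\end{itemize}
Hence each class holds $O(M^2/w)$ of the $nrd$ products, the cost is $\Omega(nrdw/M)$, and $w=w_0$ (with $w_0\tau(w_0)\le\frac{g}{4e\,2^g}M\le M/4$) gives the theorem.

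This also corrects your explanation of the factor $g$. It comes from $\tau(w)\approx(ew/g)^g$, which allows a threshold as large as $\Theta(gM^{1/(g+1)})$, not from the $g$ coordinates a monomial touches.

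Two further points:
\begin{itemize}
\item The nodes to count are the second-stage products $u_{i,S}H_{S,c}$. A triple $q$-monomial $\times$ $k$-monomial $\times$ $v$ is not a node of the Alman--Song DAG, and there would be $n^2rd$ of them.
\item Since $\matH$ is not a source, you need the paper's extra remark that replacing an $\matH$ node in a dominator set by its predecessors costs a whole column of $\matV$.
\end{itemize}
Your concern about partial monomials does not arise: each entry of $\matU_1$ is a single node whose parents are its at most $g$ entries of $\matQ$.
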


When the polynomial degree is larger or fast memory capacity is small, the problem becomes even more technical. In these cases we need an additional assumption $d \geq 5g$ for our lower bounds.

\begin{theorem}[Case III.] \label{th:case3}
If $M = o (d \cdot r)$ and $\Omega(\log{M}) \leq g \leq o(\sqrt{M})$, then the optimal I/O complexity of Approximate Attention is upper bounded by
\[ O\left(\min \left( \frac{n \cdot r \cdot d^2}{M}, \, \frac{n \cdot r \cdot d}{\sqrt{M}}\right) \right) \, , \]
and when $d \geq 5g$, it is also lower bounded by $\Omega\left( \frac{n \cdot r \cdot d \cdot g}{M} \right)$. 
\end{theorem}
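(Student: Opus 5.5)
The statement has an upper bound with two terms and a lower bound, and I would treat them separately.

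\textbf{Upper bound.} I would reduce to the structure of the algorithm of~\cite{alman2023fast}. The polynomial feature maps $\matU_1,\matU_2\in\mathbb{R}^{n\times r}$, whose entries are degree-$\le g$ monomials of the rows of $\matQ,\matK$, are never materialized. The dominant work is then two multiplications:
\begin{itemize}
\item the product $\matU_2^{\top}\matV\in\mathbb{R}^{r\times d}$, together with $\matU_2^{\top}\mathbf{1}$ for the normaliser $\matD$;
\item the product $\matU_1(\matU_2^{\top}\matV)\in\mathbb{R}^{n\times d}$.
\end{itemize}
Each is a matrix product with $\Theta(n r d)$ scalar multiplications.

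For the first term $O(nrd/\sqrt{M})$, I would run the standard blocked (I/O-optimal) matrix multiplication on tiles of size $\sqrt{M}\times\sqrt{M}$. The new ingredient is that a tile of $\matU_2$ (resp.\ $\matU_1$) is generated on the fly in fast memory from the corresponding rows of $\matK$ (resp.\ $\matQ$). Generating a $\sqrt{M}\times\sqrt{M}$ tile needs the $\sqrt{M}\cdot d$ entries of those rows, but only the coordinates appearing in the $\sqrt{M}$ chosen monomials. Each monomial involves at most $g$ coordinates, so the tile needs at most $g\sqrt{M}$ distinct columns of $\matK$.

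To keep this loading cost within budget, I would order the monomials (e.g.\ lexicographically) so that consecutive blocks of $\sqrt{M}$ monomials share few coordinates. The assumption $g=o(\sqrt{M})$ ensures that these coordinates, together with the degree-$g$ prefixes needed to evaluate the monomials incrementally, fit in a constant fraction of the memory. The cost is then dominated by the $\matV$ and output tiles, giving $O(nrd/\sqrt{M})$ in total. The intermediate $r\times d$ matrix is written to and read from slow memory once, costing $O(rd)=O(nrd/\sqrt{M})$ when $n\ge\sqrt{M}$.

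For the second term $O(nrd^2/M)$, I would use a FlashAttention-style fusion. Hold a block of $B=\Theta(M/d)$ rows of $\matK$ and $\matV$ in fast memory, and stream over the $r$ monomials. For each monomial, compute its column of $\matU_2$ on those rows, and accumulate the corresponding row of $\matU_2^{\top}\matV$ in place. Doing the same for $\matU_1$ and summing over all $n/B$ blocks, each of the $r\times d$ outputs is touched $n/B$ times. This gives $O(rd\cdot n d/M)$. Taking the better of the two schedules yields the minimum.

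\textbf{Lower bound.} I would use a red-blue pebbling / Hong--Kung partition argument. Split the computation into segments of $M$ I/Os each. Any segment can touch at most $2M$ values of $\matK$, $\matV$, or partial sums. The task is then to bound how many of the $\Theta(nrd)$ required elementary products $u_{j,m}v_{j,c}$ a segment can produce.

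The key combinatorial step is a Loomis--Whitney-type inequality for the triples $(j,m,c)$, where now $m$ ranges over monomials built from coordinates of row $j$. The cheapest way to obtain many monomial values is to have many coordinates available, since $x$ coordinates yield about $\binom{x+g}{g}$ monomials. The assumption $d\ge5g$ guarantees that $r$ is large enough that this expansion is genuinely restricted. In Case III, with $g\ge\log M$, I expect the expansion to yield at most $O(M^2/g)$ useful products per segment. Dividing the $nrd$ products by $M^2/g$ segments, each costing $M$ I/Os, gives $\Omega(nrdg/M)$.

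\textbf{Main obstacle.} I expect the per-segment bound $O(M^2/g)$ to be the main obstacle: showing that monomials generated from few stored coordinates cannot be combined with enough $\matV$ entries. I would first try to bound, for each row $j$, the product of the number of available monomial values and the number of available $\matV$ entries, by a product of cache-resident quantities. I would then apply a $\binom{x+g}{g}$ estimate in the regime $x\le M$, $g\ge\log M$, where $\binom{x+g}{g}\le (e(x+g)/g)^g$ loses the factor $g$.
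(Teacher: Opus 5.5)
Your upper bound is fine. The $O(nrd^2/M)$ schedule is essentially the paper's strip tiling from Lemma~\ref{lem:static}. Your on-the-fly square tiling is a valid alternative to the paper's plain square tiling, provided you prove the fact it rests on. That fact is as follows: in lexicographic order of non-decreasing index tuples (homogenized to degree exactly $g$), each successor introduces at most one new coordinate. Hence $B$ consecutive monomials use at most $g+B$ coordinates, which is $O(\sqrt{M})$ for $B=\Theta(\sqrt{M})$.

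The lower bound, however, has a genuine gap. The per-segment bound of $O(M^2/g)$ products you expect is false if you count products over all $r$ monomials, so no estimate of $\binom{x+g}{g}$ can deliver it. Take $g=M^{1/3}$, which lies inside Case~III and is compatible with $d\ge 5g$. Fix one column $c$, and for $M/8$ rows $j$ have $K_{j,1},\dots,K_{j,4}$ and $V_{j,c}$ available. There are $\binom{g+4}{4}=\Omega(M^{4/3})$ monomials in $x_1,\dots,x_4$, so pick $M$ of them as the live partial sums of $\matH_{m,c}$. This segment touches fewer than $2M$ values but computes $M^2/8$ products $u_{j,m}v_{j,c}$. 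So counting all $nrd$ products yields only $\Omega(nrd/M)$. Note also that $g=\Omega(\log M)$ is not the source of the factor $g$; the paper's lower bound never uses it.

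The missing idea is to count only \emph{hard} monomials.
\begin{itemize}
\item By Lemma~\ref{lem:halfcover}, which is where $d\ge 5g$ enters, a constant fraction of the $r$ columns of $\matU_1$ are monomials with at least $g/2$ distinct variables. So it suffices to lower-bound the sub-multiplication on those columns, treating all other I/O as free.
\item In a class of a $2M$-partition, split the rows by whether at least $g/2$ of their $\matQ$- and $\matU_1$-entries lie in the dominator set.
\item There are at most $4M/g$ heavy rows, each contributing at most $2M$ uncovered internal nodes (one per dominated entry of $\matH$). This gives $8M^2/g$ in total.
\item A light row cannot generate any restricted monomial from $\matQ$, so fewer than $g/2$ of its $\matU_1$-values are usable. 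With at most $2M$ outputs in the minimum set, this gives at most $gM$.
\item Then $g=o(\sqrt{M})$ makes $gM=O(M^2/g)$, and Lemma~\ref{lem_spart} gives $\Omega(nrdg/M)$.
\end{itemize}
The same argument works in your $\matU_2^{\top}\matV$ formulation. There the live partial sums $\matH_{m,c}$ take the place of the dominated $\matH$-entries (bounding the heavy rows), and the dominated entries $V_{j,c}$ take the place of the output sinks (bounding the light rows).
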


Finally, for tiny cache sizes $M=O(g^2)$, the problem behaves like standard matrix multiplication.

\begin{theorem}[Case IV.] \label{th:case4}
If $M = o (d \cdot r)$ and $g = \Omega(\sqrt{M})$, the optimal I/O complexity of Approximate Attention is upper and lower bounded by $\Theta\left(\frac{n \cdot r \cdot d}{\sqrt{M}} \right)$, where the lower bound also requires $d \geq 5g$. 
\end{theorem}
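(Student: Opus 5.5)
The plan treats the Alman--Song algorithm as three stages. First we form the feature matrices $\matU_1,\matU_2\in\mathbb{R}^{n\times r}$, whose rows are the degree-$\le g$ monomials of the (scaled) rows of $\matQ$ and $\matK$. Then we compute $\matU_2^{\top}\matV\in\mathbb{R}^{r\times d}$ (and $\matU_2^{\top}\mathbf{1}$ for the normalizer $\matD$). Finally we compute $\matU_1(\matU_2^{\top}\matV)$ and rescale. The dominant work is two products of shape $n\times r$ by $r\times d$, with inner or outer dimension $r$. In Case IV we have $M=O(g^2)$, so the fast memory is too small to exploit the monomial structure, and we can treat both products as generic dense matrix multiplications.

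\textbf{Upper bound.} We never materialize $\matU_1,\matU_2$ in slow memory; each entry is produced on the fly from the $d$ entries of a row of $\matK$ (or $\matQ$). For $\matU_2^{\top}\matV$ we use the standard blocked algorithm with $\sqrt{M}\times\sqrt{M}$ tiles. The output is tiled into $(r/\sqrt{M})(d/\sqrt{M})$ blocks, and each output block is accumulated over the $n$ rows in chunks of $\sqrt{M}$. An $r$-side tile of $\matU_2$ (a set of $\sqrt{M}$ monomials for $\sqrt{M}$ rows) is generated by loading the corresponding $\sqrt{M}$ rows of $\matK$. This is where care is needed, since loading full rows costs $\sqrt{M}\cdot d$ per tile. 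To avoid this, we group the monomials so that each tile only depends on a few coordinates: order them so that a block of $\sqrt{M}$ monomials involves at most $O(\sqrt{M})$ distinct variables, which is possible because each monomial has at most $g$ variables. Alternatively, we can reuse loaded $\matK$ rows across all $r/\sqrt{M}$ tiles, accepting an $O(nd\cdot r/\sqrt{M})$ reload term. Either way the cost per tile is $O(M)$, so the total is $O(nrd/\sqrt{M})$. The second product $\matU_1(\matU_2^{\top}\matV)$ is handled symmetrically, with inner dimension $r$ and output $n\times d$, again costing $O(nrd/\sqrt{M})$. Reading the inputs and writing the output adds $O(nd)$, which is dominated.

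\textbf{Lower bound.} We use the red--blue pebbling / Hong--Kung $S$-partition argument specialized to the products, in the style of Loomis--Whitney. Split the computation into segments of $M$ I/Os. In any segment, at most $2M$ values of $\matU_2$ (computed or loaded), $\matV$, and partial sums are available. By Loomis--Whitney, at most $O(M^{3/2})$ elementary multiply-adds $u_{ki}v_{kj}$ can occur in a segment. The total number of such products is $nrd$, giving $\Omega(nrd/M^{3/2})$ segments and hence $\Omega(nrd/\sqrt{M})$ I/O.

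The main obstacle is that entries of $\matU_2$ are not inputs; they can be recomputed from $\matK$. So the "$\matU$-face" of the Loomis--Whitney bound must count generated monomial values, which a segment might produce in large numbers from only $M$ loaded $\matK$-entries: from $M$ coordinates one can form up to $\binom{M+g}{g}$ monomials. Here the assumption $g=\Omega(\sqrt{M})$ and $d\ge 5g$ is what I would use. Each product must still pass through $\matV$ values and partial sums of $\matU_2^{\top}\matV$, and these two faces are genuine data. I would therefore bound the products per segment using only those two faces together with the structure of a single row. Within a segment, for a fixed row $k$, the useful products are $u_{ki}v_{kj}$ with $v_{kj}$ in memory and the output partial sum $(i,j)$ in memory. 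This gives at most $\min(|V_k|\cdot|\text{out}|,\ldots)$ per row, and summing over rows with the budget $M$ gives $O(M^{3/2})$. The Case III lower bound argument (stated earlier) presumably supplies the refined counting, and I would adapt it, replacing its $g/M$ factor by $1/\sqrt{M}$ once $g\ge\sqrt{M}$. Finally, $d\ge5g$ guarantees that $r$ is large enough that the $nrd$ products are genuinely distinct and not trivially collapsible.
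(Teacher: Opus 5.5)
\textbf{The gap is in your lower bound.} The Loomis--Whitney step assumes that at most $2M$ entries of $\matU_2$ are available in a segment. As you note yourself, this is false. Your repair is to count products only through the $\matV$-face and the output face, plus ``the structure of a single row''. That cannot produce $O(M^{3/2})$, and in fact no count over \emph{all} monomials can. In Case IV we have $\tau(2)=\binom{g+2}{2}=\Omega(M)$. So a segment can:
load two fixed coordinates in each of $\Theta(M)$ rows of $\matK$;
generate $\Theta(M)$ monomials per row on the fly;
combine them with $\Theta(M)$ entries of one column of $\matV$ and the $\Theta(M)$ matching partial sums of $\matU_2^{\top}\matV$.
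This performs $\Theta(M^2)$ products with only $O(M)$ I/Os. The unspecified ``$\ldots$'' in your $\min$ is therefore the whole difficulty, and deferring to Case III does not fill it. Your stated role for $d\ge 5g$ (making the products ``genuinely distinct'') is also not what is needed.

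The missing idea in the paper is to discard the low-support monomials. Restrict to the columns of $\matU_1$ whose monomial uses at least $g/2$ distinct coordinates. By Lemma~\ref{lem:halfcover}, when $d\ge 5g$ these form a constant fraction of the $r$ columns, so $\Theta(nrd)$ products remain; this is what $d\ge 5g$ is for.

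Now fix a class of a $2M$-partition and call a row heavy if at least $g/2$ of its $\matQ$- or $\matU_1$-entries lie in the dominator set. There are at most $4M/g$ heavy rows. Each carries at most $2M$ uncovered products, since every dominator entry of $\matH$ meets exactly one entry of that row. Heavy rows therefore contribute $O(M^2/g)=O(M^{3/2})$, and this is exactly where $g=\Omega(\sqrt{M})$ is used.

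In a light row, no restricted monomial can have all its parents in the dominator set. Hence every $\matU_1$ entry that is used must itself be in the dominator set, which restores the third face. Case III's bound of $g\cdot M$ for light rows is too weak once $g\gg\sqrt{M}$. The paper therefore splits the light rows again at $\sqrt{M}$ dominator entries, as in Hong--Kung's matrix-multiplication bound, and obtains $6M^{3/2}$. Each class thus covers $O(M^{3/2})$ restricted products, and Lemma~\ref{lem_spart} gives $\Omega(nrd/\sqrt{M})$. For your product $\matU_2^{\top}\matV$ the same argument works, with heavy rows bounded through the at most $2M$ output entries instead.

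Your upper bound follows the same route as the paper, which simply invokes the $\sqrt{M}$-tiling of Lemma~\ref{lem:static}. Two details need fixing.
\begin{itemize}
\item Your first variant is false when $g=\omega(\sqrt{M})$. A single monomial may then involve $g\gg\sqrt{M}$ coordinates, so ``at most $g$ variables'' does not confine a block to $O(\sqrt{M})$ variables.
\item Your second variant works only with the right loop order. You cannot keep $\sqrt{M}$ full rows of $\matK$ resident when $d>\sqrt{M}$. So each $\sqrt{M}\times\sqrt{M}$ block of $\matU_2$ must be generated once, row by row, at cost $O(\sqrt{M}\,d)$, and then reused for all $d/\sqrt{M}$ column blocks of $\matV$. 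This is $O(M)$ per tile only after amortization.
\end{itemize}
Simpler still, materialize $\matU_1$ and $\matU_2$ at cost $O(nr+nd)$. This is $O(nrd/\sqrt{M})$ whenever $d=\Omega(\sqrt{M})$, for example under $d\ge 5g$.
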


These theorems provide a comprehensive overview of the optimal I/O complexity of Approximate Attention. From a broader perspective, our results can also be understood as an I/O analysis of matrix multiplication in a special setting where one of the input matrices is sparse, or behaves unusually in some way regarding I/O. We believe that out proof techniques could also inspire further results in sparse attention computation or in entirely different domains.

\begin{table}[t]
\caption{Notation and summary of the main parameters used throughout our paper.
}
        \centering
        \vspace{-3pt}
        \small
        \renewcommand{\arraystretch}{1.35}
        \begin{tabular}{| c | c | m{7.7cm} |}
        \hline
            Parameter & Origin & Description \\ 
         \hline\hline
          \makecell{$n$ \\ (sequence length)} & Input problem & Number of rows in $\matQ$, $\matK$, $\matV$. \\ \hline
          \makecell{$d$ \\ (feature dimension)} & Input problem & Number of columns in $\matQ$, $\matK$, $\matV$. Often assumed to be $O(\log n)$ for long-context attention~\cite{alman2023fast}. \\ \hline
          \makecell{$M$ \\ (fast memory\\capacity)} & Architecture & The amount of available fast memory (i.e.\ cache size), which determines the optimal I/O cost. \\ \hline
          \makecell{ $g$ \\ (degree of approxi-\\mating polynomial) } & \makecell{Alman \& Song's\\algorithm} & Degree of the polynomial used to approximate the exponentiation. Depends on the accepted approximation error and the magnitude of the entries. Assumed to be $o(\log n)$ in~\cite{alman2023fast}.
          \\ \hline
          \makecell{$r$ \\(dimension of the \\approximating \\ matrices)} & \makecell{Alman \& Song's\\algorithm} & The maximal number of terms in a polynomial of degree $g$ on $d$ variables. Its value equals ${d+g \choose g}$. This determines the number of columns in the approximation matrices $\matU_1$ and $\matU_2$. \\ \hline
          \makecell{$w$ \\(generating set size)} & Our I/O analysis & The number of entries we decide to load from slow memory in a row of $\matQ$ (or $\matK$), used in our proofs. If $w \leq d$, this allows to compute at most ${w+g \choose g}$ entries in a row of $\matU_1$ (or $\matU_2$). \\ \hline
        \end{tabular}
        \vspace{-4pt}
  \label{tab:params}
\end{table}

\subsection{Parameters and discussion}

We discuss the bounds in more detail with some example parameters for clarity.

For the polynomial degree $g$, Alman and Song consider $g=o(\log n)$~\cite{alman2023fast}; together with $d=O(\log n)$, this is already enough to establish $r=n^{o(1)}$ in their analysis. However, in practice, it is more realistic to have the degree $g$ even smaller, e.g.\ a constant. This is indeed a valid choice of $g$ in the approximation setting of~\cite{alman2023fast} when the approximation accuracy $\epsilon$ and the magnitude $B$ of the entries are constants, i.e., they do not grow as a function of $n$, which is reasonable for long--context attention.

We point out that if $g=O(1)$, then the upper and lower bounds in Theorems~\ref{th:case2} and \ref{th:case3} are also tight up to a constant factor. This is easy to see for Theorem~\ref{th:case2}. In Theorem~\ref{th:case3}, the gap between the bounds is at most $\frac{\sqrt{M}}{g}$; with $g=\Omega(\log M)$, we can upper bound this by $2^{O(g)}$, which is again a constant. Theorems~\ref{th:case1} and \ref{th:case4} are less interesting from this angle: these bounds are always tight.

In order to compare the bounds to FlashAttention, let us consider $d^2 < M < n \cdot d$; this is the range where FlashAttention improves upon naive matrix multiplication~\cite{saha2024complexity}.

Recall that the parameter choices in~\cite{alman2023fast} ensure that $r=n^{o(1)}$. The condition of Case I is $M = \Omega (d \cdot r)$; hence we are in this setting if $M$ is larger than $n^{o(1)}$, e.g.\ if $M=\sqrt{n}$ or $M=n^{0.01}$. Our upper bound here matches the trivial lower bound of $\Omega(n \cdot d)$, while FlashAttention incurs a cost of $\Theta(\frac{n^2 \cdot d^2}{M})$~\cite{saha2024complexity}. Thus our bounds indeed outperform FlashAttention for $M = o(n \cdot d)$. The factor of difference $\frac{n \cdot d}{M}$ between our method and FlashAttention becomes even more significant as $n$ grows.

For the remaining cases, the bounds we obtain are again almost-linear in $n$:
for Cases II, III and IV, respectively, we get upper bounds of
\[ O\left( \frac{n ^{1+o(1)} \cdot d}{M^{\frac{g}{g+1}}} \right), \quad O\left( \min \left( \frac{n ^{1+o(1)} \cdot d^2}{M}, \, \frac{n ^{1+o(1)} \cdot d}{\sqrt{M}} \right) \right) \quad \text{and} \quad O\left( \frac{n ^{1+o(1)} \cdot d}{\sqrt{M}} \right) \, . \]
In particular, the first two of these expressions are at most $O \!\left( \frac{n ^{1+o(1)} \cdot d^2}{M} \right)$ in the given parameter regimes.
FlashAttention has a very similar I/O cost of $\Theta(\frac{n^2 \cdot d^2}{M})$, which depends quadratically on $n$; hence compared to it, our algorithms save at least an almost-linear factor of I/O cost in $n$.

\section{Preliminaries} \label{sec:prelim}

\subsection{Approximate attention}
\label{section:approximate_attention}

Many existing works on attention algorithms target the \emph{exact} computation of the attention matrix. While this is not necessarily ``unreasonable'', it raises some intricate questions. In particular, if attention can be computed exactly in $T$ steps, then $\exp(x)$ can be also computed exactly in $T+O(1)$ steps (see Appendix \ref{appendix:attention_implies_exponential}). However, assuming that the entire sequence of digits of $\exp(x)$ can be returned in a single ``time-step'' might raise concerns regarding the machine model; see e.g.\ the discussion in \cite{erickson2022smoothing}. To avoid such intricacies, we target \emph{approximate attention} algorithms, rather than exact ones.
We consider the following approximate attention definition from \cite{alman2023fast}.

\begin{problem}[Additive-error Approximate Attention \cite{alman2023fast}] \label{problem:approximate_attention}
Given $\epsilon>0$, and matrices $\matQ,\matK,\matV\in\mathbb{R}^{n\times d}$, return a matrix $\matAtilde$ which satisfies:
\begin{align*}
    \max_{i,j}\left|\matAtilde_{i,j}- \Att(\matQ,\matK,\matV)_{i,j}\right| \leq \epsilon.\end{align*}
\end{problem}
This specific definition has proven useful to obtain sharp bounds on the arithmetic complexity of attention. When $d=O(\log(n))$ and $\matQ,\matK,\matV$ have entries of magnitude $o(\sqrt{\log(n)})$, the authors of~\cite{alman2023fast} describe an algorithm with nearly-linear $n^{1+o(1)}$ complexity in the algebraic model; we describe this in detail in Section~\ref{sec:alman}. On the other hand, the authors also prove that if the magnitude of entries in $\matQ,\matK,\matV$ is $\Theta(\sqrt{\log(n}))$, then no algorithm can solve Problem \ref{problem:approximate_attention} up to $1/\poly(n)$  error in sub-quadratic time under the Strong Exponential Time Hypothesis (SETH).

\subsection{The algorithm of Alman and Song} \label{sec:alman}
In brief, the algorithm of Alman and Song~\cite{alman2023fast} that solves \ref{problem:approximate_attention} works as follows:
\begin{enumerate}[leftmargin=2.3em, topsep=1pt, itemsep=3pt]
    \item Compute the coefficients of a degree-$g$ polynomial $P(x)$, which approximates $\exp(x)$ in the domain $[-B,B]$, where $B$ is a bound on the magnitudes of the elements of $\matQ,\matK$.
    \item Construct two matrices $\matU_1,\matU_2\in\mathbb{R}^{n\times r}$ for some $r=n^{o(1)}$, such that $\matU_1\matU_2^\top = P(\matQ\matK^\top) \approx \exp(\matQ\matK^\top/\sqrt{d})$. Intuitively, the $i^{\text{th}}$ row of $\matU_1$ ($i^{\text{th}}$ row of $\matU_2$, respectively) is obtained from the $i^{\text{th}}$ row of $\matQ$ ($i^{\text{th}}$ row of $\matK$), via an expansion of the polynomial $P(\cdot)$. 
    \item Return $\matAtilde\gets \widetilde \matD^{-1}(\matU_1(\matU_2^\top \matV))$, where $\widetilde \matD=\diag(\matU_1(\matU_2^\top \boldsymbol{1}))$. 
\end{enumerate}

The key idea that makes this algorithm efficient is that the polynomial approximation completely removes the non-linearity of the $\softmax$. This allows to execute the matrix multiplication $\matU_2^\top \matV$ first, and then $\matU_1(\matU_2^\top \matV)$ afterwards, which altogether needs significantly fewer arithmetic operations.

Going in more detail, in step 2 we have that $\exp(\matQ\matK^{\top})_{i,j}=\exp(\vecq^\top \veck)$, where $\vecq^\top$ is the $i^{\text{th}}$ row of $\matQ$ and $\veck$ is the $j^{\text{th}}$ column of $\matK^{\top}$. When $\exp(x)$ is replaced by the polynomial $P(x)$, we can write:
\begin{align*}
    P(\vecq^\top \veck) &= P(q_1k_1+q_2k_2 +\ldots +q_dk_d) =
    \sum_{l=0}^g c_l(q_1k_1+q_2k_2 +\ldots +q_dk_d)^l.
\end{align*}
If we expand $(q_1k_1+q_2k_2 +\ldots +q_dk_d)^l$ for any $l \!\in \! \{1,\ldots, g\}$, we obtain a sum with $\binom{d+l-1}{l}$ additive terms. Each additive term is of the form $ (q_1k_1)^{l_1}\cdot (q_2k_2)^{l_2}\cdot \ldots \cdot (q_dk_d)^{l_d}$, where $l_1+l_2+\ldots+l_d=l$. 
Altogether, we get $r = \sum_{l=0}^{g} {d + l -1 \choose l}$ additive terms. Ultimately, since each term is the product of some powers of the elements of $\vecq$ and $\veck$, the polynomial $P(\vecq^\top \veck)$ can be written as a bilinear form $P(\vecq^\top \veck) = \vecu^\top \matC \vecv$, where $\vecu\in\mathbb{R}^{r}$ consists of products of (powers of) the elements of $\vecq$, while $\vecv\in\mathbb{R}^r$ consists of products of (powers of) the elements of $\veck$, and $\matC\in\mathbb{R}^{r\times r}$ is a diagonal matrix with scalar coefficients that depend on the polynomial $P(\cdot)$, but are independent from $n$ and $d$.

To ensure $|P(x)-\exp(x)| \leq \epsilon \cdot  \exp(x)$, setting $g:=\Theta\left(\max\left\{\frac{\log(1/\epsilon)}{\log(\log(1/\epsilon)/B^2)}, B^2\right\}\right)$ is sufficient~\cite{aggarwal2022optimal,alman2023fast}. The coefficients of $P(\cdot)$ can be computed as $\poly(g)$-bit rationals in $\poly(g)$ time. 

Below we introduce a specific notation for the number of possible terms in a polynomial of degree $g$ on $w$ variables; this will play a central role in our analysis.

\begin{definition}
Let us define the function $\tau: \mathbb{Z}^+ \rightarrow \mathbb{Z}^+$ as a shorthand notation for
\[  \tau(w) := \sum_{l=0}^{g} \binom{w+l-1}{l} = {w+g \choose g}\, . \]
\end{definition}
Intuitively, ${w+l-1 \choose l}$ is a combination with repetition, i.e.\ the number of ways we can divide a total degree of $g$ among $w$ variables. The number of columns in $\matU_1$ and $\matU_2$ is $r=\tau(d)$. The closed form expression of the sum in the definition can be shown through an induction; we defer the proof to Appendix~\ref{app:tech_lemma}. We note that the work of~\cite{alman2023fast} uses a looser bound of ${2(w+g) \choose 2g}$ to bound $r$.

Note that the algorithm above executes matrix multiplications with a standard inner product-based algorithm. Previous works on the I/O analysis of attention also focus on standard matrix multiplication~\cite{saha2024complexity}. It is well-known that ``Strassen-like'', fast matrix multiplication algorithms (see e.g. \cite{alman2025more,alman2025improving} and references therein) can achieve better computational and I/O complexity~\cite{demaine2018fine}. 
However, these algorithms are often known to be rather impractical, and their analysis is more complex than the standard algorithm, especially when rectangular matrices are involved.

Throughout the paper, we analyze I/O-efficient algorithms based on the aforementioned Approximate Attention method. Note that our upper bounds naturally carry over to solving Problem~\ref{problem:approximate_attention} in general. On the other hand, lower bounds in standard I/O-complexity models are inherently coupled to a concrete algorithm, so our bounds here are specific to this Approximate Attention technique.

\subsection{Model of computation}

The most prominent model to analyze the I/O complexity of computations is the red-blue pebble game of Hong and Kung~\cite{hong1981complexity}. In this setting, a computation is captured as a directed acyclic graph (DAG), where the nodes represent operations, and the directed edges represent data dependencies, i.e.\ that the output of an operation is required as an input for another operation. The computational DAG corresponding to the approximate attention algorithm from~\cite{alman2023fast} is illustrated in Figure~\ref{fig:graph}. Note that the algorithm only uses basic algebraic operators $\{+,-,\times,/\}$, and square roots, hence avoiding the intricacies of exact exponentiation that we mentioned in Section \ref{section:approximate_attention}.

The red-blue pebble game models the execution of the computation in a two-level memory hierarchy: we have a fast memory with limited capacity $M$, and slow memory of unlimited capacity. If the output of an operation is currently stored in fast (slow) memory, then this is indicated by having a red (blue) pebble on the node. The number of red pebbles can not exceed the cache capacity $M$ at any time. Loading a value from (respectively, saving a value to) slow memory then corresponds to placing a red (blue) pebble on a node that already has a blue (red) pebble. A new output value can only be computed (placing a red pebble on the node) if all the parents of the node have a red pebble.

The execution of the computation is modeled by a sequence of steps, where each step can load, save, compute or delete a value. Initially, the source nodes of the DAG have a blue pebble, and the process finishes when all the sink nodes have a blue pebble. Since the goal of the model is to capture I/O complexity, the cost of the pebbling sequence is the total number of save and load operations executed; compute and delete steps are considered free.

Red-blue pebbling has been thoroughly analyzed in several works, and widely used to establish upper and lower bounds on the I/O cost of specific computations~\cite{hong1981complexity, papp2025impact, sobczyk2026complexity, demaine2018red, papp2020on, kwasniewski2019red, boehnlein2025red}.

\section{The I/O Cost of Approximate Attention} \label{sec:proof}
In this section, we outline the main intuition behind the bounds in Theorems\ref{th:case1}-\ref{th:case4}, and the main ingredients of the proofs. The detailed proofs are deferred to Appendices~\ref{app:keylemma}, \ref{app:case3} and \ref{app:case4}.

Our computation $\matU_1(\matU_2^\top \matV)$ consists of two consecutive matrix multiplications; we denote the intermediate matrix by $\matH := \matU_2^\top \matV$. Throughout the proofs, we focus on the second multiplication $\matU_1 \matH$; note that $\matU_1 \in\mathbb{R}^{n\times r}$ and $\matH \in\mathbb{R}^{r\times d}$. The lower and upper bounds are then easy to extend to the entire computation $\matU_1(\matU_2^\top \matV)$. Considering the sub-problem $\matU_1 \matH$ separately is in fact a restriction of I/O strategies: the entries in $\matH$ here are all inputs that need to be loaded from slow memory. This rules out ``fusion-based'' strategies that never save the entries of $\matH$, but instead immediately go on to multiply them with $\matU_1$. Intuitively, these strategies can be ignored because $\matH$ is smaller than $\matQ$, $\matK$ and $\matV$, so saving/loading it does not affect the magnitude of the total cost.

\subsection{Intuition: optimal tile design}

In standard matrix multiplication, the key to optimal I/O strategies  is to divide the matrix into rectangular tiles to maximize reusing the same inputs and outputs. In general, when multiplying matrices $\matA\in\mathbb{R}^{m_1\times m_2}$ and $\matB\in\mathbb{R}^{m_2\times m_3}$, the optimal strategy uses square tiles. That is, we read the inputs in a square-shaped $\Theta(\sqrt{M}) \times \Theta(\sqrt{M})$ tile in $\matA$, and another square-shaped $\Theta(\sqrt{M}) \times \Theta(\sqrt{M})$ tile in $\matB$, and we execute the corresponding part of the matrix multiplication. This gives us a partial sum for each entry in a $\Theta(\sqrt{M}) \times \Theta(\sqrt{M})$ tile of the output matrix, which we can add (and save) to the already aggregated values for this specific output entry. Each such tiling step only requires a fast memory capacity of $\Theta(M)$, it has an I/O cost of only $\Theta(M)$, and covers $\Theta(\sqrt{M})^3 = \Theta(M^{3/2})$ of the scalar multiplications that compose the entire operation. With a total of $m_1 \cdot m_2 \cdot m_3$ multiplications in the entire operation, this allows for an algorithm with an I/O cost of $\Theta\left(\frac{m_1 \cdot m_2 \cdot m_3}{\sqrt{M}}\right)$. This general tiling idea is illustrated in Figure~\ref{fig:matmul} for the matrix multiplication $\matU_1 \matH$.

\begin{figure}[t!]
    \centering
    \begin{minipage}{.44\textwidth}
      \centering
      \vspace{22pt}
      \resizebox{0.85\textwidth}{!}{
      \begin{tikzpicture}

    \draw (0pt,0pt) -- (0pt,20pt) -- (50pt,20pt) -- (50pt,0pt) -- cycle;

    \draw (-10pt,40pt) -- (-10pt,60pt) -- (60pt,60pt) -- (60pt,40pt) -- cycle;

    \draw (-45pt,80pt) -- (-45pt,100pt) -- (15pt,100pt) -- (15pt,80pt) -- cycle;

    \draw (-40pt,120pt) -- (-40pt,140pt) -- (10pt,140pt) -- (10pt,120pt) -- cycle;

    \draw (55pt,80pt) -- (55pt,100pt) -- (95pt,100pt) -- (95pt,80pt) -- cycle;

    \draw (40pt,120pt) -- (40pt,140pt) -- (110pt,140pt) -- (110pt,120pt) -- cycle;

    \draw (0pt,160pt) -- (0pt,180pt) -- (60pt,180pt) -- (60pt,160pt) -- cycle;

    \draw (5pt,200pt) -- (5pt,220pt) -- (55pt,220pt) -- (55pt,200pt) -- cycle;

    \draw (90pt,160pt) -- (90pt,180pt) -- (140pt,180pt) -- (140pt,160pt) -- cycle;

	\node[anchor=center] at (25pt,10pt) {output};
    \node[anchor=center] at (-15pt,90pt) {$\matU_1$};
    \node[anchor=center] at (-15pt,130pt) {$\matQ$};
    \node[anchor=center] at (75pt,90pt) {$\matH$};
    \node[anchor=center] at (30pt,170pt) {$\matU_2^{\top}$};
    \node[anchor=center] at (30pt,210pt) {$\matK^{\top}$};
    \node[anchor=center] at (115pt,170pt) {$\matV$};

    \begin{scope}[very thick, arrows=-stealth]
    \draw (25pt,40pt) -- (25pt,20pt);
    \draw (-15pt,80pt) -- (10pt,61pt);
    \draw (75pt,80pt) -- (40pt,61pt);
    \draw (-15pt,120pt) -- (-15pt,100pt);
    \draw (75pt,120pt) -- (75pt,100pt);
    \draw (30pt,160pt) -- (60pt,141pt);
    \draw (115pt,160pt) -- (90pt,141pt);
    \draw (30pt,200pt) -- (30pt,180pt);
    \end{scope}

    \begin{scope}[thick, gray, arrows=-stealth]
    \draw (-4pt,126pt) -- (1pt,91.5pt);
    \draw (4pt,126pt) -- (4pt,93pt);
    \draw (63pt,90pt) -- (27pt,52pt);
    \draw (4pt,90pt) -- (23pt,52pt);
    \draw (25pt,50pt) -- (10pt,13pt);
    \draw (-3pt,32pt) -- (4pt,13pt);
    \draw (7pt,32pt) -- (7pt,14pt);
    \draw (75pt,130pt) -- (66pt,91pt);
    \draw (53pt,110pt) -- (60pt,91pt);
    \draw (63pt,110pt) -- (63pt,92pt);
    \draw (100pt,170pt) -- (77pt,132pt);
    \draw (50pt,170pt) -- (73pt,132pt);
    \draw (50pt,206pt) -- (50pt,173pt);
    \draw (42pt,206pt) -- (47pt,171.5pt);
    \end{scope}

    \begin{scope}[thick, gray]
    \draw[fill=white] (-4pt,126pt) circle (0.6ex);
    \draw[fill=white] (4pt,126pt) circle (0.6ex);
    \draw[fill=white] (4pt,90pt) circle (0.6ex);
    \draw[fill=white] (63pt,90pt) circle (0.6ex);
    \draw[fill=white] (25pt,50pt) circle (0.6ex);
    \draw[fill=white] (7pt,12pt) circle (0.6ex);
    \draw[fill=white] (75pt,130pt) circle (0.6ex);
    \draw[fill=white] (100pt,170pt) circle (0.6ex);
    \draw[fill=white] (50pt,170pt) circle (0.6ex);
    \draw[fill=white] (50pt,206pt) circle (0.6ex);
    \draw[fill=white] (42pt,206pt) circle (0.6ex);
    \end{scope}

\end{tikzpicture}}
      \vspace{4pt}
      \caption{Sketch of the computational DAG for the approximate attention algorithm~\cite{alman2023fast}. Each box represents a set of nodes. In matrix multiplication, we first multiply pairs of entries from the input matrices, and then sum these up to from an entry in the output. In $\matU_1$ (and $\matU_2$), each node is computed from at most $g$ nodes of $\matQ$ (and $\matK$).}
        \label{fig:graph}
    \end{minipage}
    \hspace{.1\textwidth}
    \begin{minipage}{.44\textwidth}
      \centering
      \resizebox{0.9\textwidth}{!}{
      \begin{tikzpicture}

    \draw (0pt,0pt) -- (0pt,80pt) -- (60pt,80pt) -- (60pt,0pt) -- cycle;

    \draw (80pt,0pt) -- (80pt,80pt) -- (120pt,80pt) -- (120pt,0pt) -- cycle;

    \draw (80pt,100pt) -- (80pt,160pt) -- (120pt,160pt) -- (120pt,100pt) -- cycle;

	\node[anchor=center] at (30pt,40pt) {$\matU_1$};
    \node[anchor=center] at (100pt,130pt) {$\matH$};

    \begin{scope}[arrows=-stealth]
    \draw (63pt,40pt) -- (77pt,40pt);
    \draw (100pt,97pt) -- (100pt,83pt);
    \end{scope}

    \begin{scope}[very thick, gray, arrows=-stealth]
    \draw (28pt,67.5pt) -- (76pt,67.5pt);
    \draw (88.5pt,132pt) -- (88.5pt,84pt);
    \end{scope}

    \begin{scope}[ultra thick, gray]
    \draw (2pt,78pt) -- (2pt,57pt) -- (23pt,57pt) -- (23pt,78pt) -- cycle;
    \draw (82pt,78pt) -- (82pt,57pt) -- (95pt,57pt) -- (95pt,78pt) -- cycle;
    \draw (82pt,158pt) -- (82pt,137pt) -- (95pt,137pt) -- (95pt,158pt) -- cycle;
    \end{scope}

    \begin{scope}[thick, gray]
    \draw (-10pt,0pt) -- (-13pt,0pt) -- (-13pt,80pt) -- (-10pt,80pt);
    \draw (-13pt,40pt) -- (-16pt,40pt);
    \draw (0pt,90pt) -- (0pt,93pt) -- (60pt,93pt) -- (60pt,90pt);
    \draw (30pt,93pt) -- (30pt,96pt);
    \draw (70pt,100pt) -- (67pt,100pt) -- (67pt,160pt) -- (70pt,160pt);
    \draw (64pt,130pt) -- (67pt,130pt);
    \draw (80pt,170pt) -- (80pt,173pt) -- (120pt,173pt) -- (120pt,170pt);
    \draw (100pt,173pt) -- (100pt,176pt);
    \draw (99pt,78pt) -- (102pt,78pt) -- (102pt,57pt) -- (99pt,57pt);
    \draw (102pt,67.5pt) -- (105pt,67.5pt);
    \draw (82pt,54pt) -- (82pt,51pt) -- (95pt,51pt) -- (95pt,54pt);
    \draw (88.5pt,51pt) -- (88.5pt,48pt);
    \draw (2pt,54pt) -- (2pt,51pt) -- (23pt,51pt) -- (23pt,54pt);
    \draw (12.5pt,51pt) -- (12.5pt,48pt);
    \end{scope}

    \node[anchor=center] at (-21pt,40pt) {$n$};
    \node[anchor=center] at (59pt,130pt) {$r$};
    \node[anchor=center] at (30pt,100pt) {$r$};
    \node[anchor=center] at (100pt,181pt) {$d$};
    \node[anchor=center] at (110pt,67.5pt) {$t_1$};
    \node[anchor=center] at (88.5pt,42pt) {$t_2$};
    \node[anchor=center] at (12.5pt,42pt) {$t_3$};

\end{tikzpicture}}
      \vspace{4pt}
      \caption{Illustration of matrix multiplication with tiles, for the example $\matU_1 \matH$. The output matrix is split to tiles of size $t_1 \times t_2$, and each of these is aggregated in tiles of width $t_3$ (respectively, height $t_3$) in the corresponding row strip of $\matU_1$ (column strip of $\matH$). Altogether, the output matrix is computed via $\frac{n}{t_1} \cdot \frac{d}{t_2} \cdot \frac{r}{t_3}$ tiling steps.}
      \label{fig:matmul}
    \end{minipage}
\end{figure}

Intuitively speaking, this tiling is optimal for standard matrix multiplication because it allows to execute the highest number $\Theta(M^{3/2})$ of the scalar multiplications while the tile sizes in the input/ output matrices is still $\Theta(M)$. The best such tiling is naturally obtained when we use square-shaped tiles in the inputs and the output. However, with our special matrices $\matU_1$ and $\matU_2$, the situation is significantly different. The entries in a given row of $\matU_1$ are not inputs, but values computed from the corresponding row of $\matQ$; in fact, by loading only $w$ entries from the $i^{\text{th}}$ row of $\matQ$ (for some $w \leq d$), we can compute up to $\tau(w)=\binom{g+w}{w}$ different entries in the $i^{\text{th}}$ row of $\matU_1$.

This means that in order to form an essentially square-shaped tile in $\matU_1$ with $\Theta(M)$ load operations, we actually need to solve the following equation for $w$:
\begin{equation} \label{eq:w}
\tau(w) = \frac{\Theta(M)}{w}.
\end{equation}
Let us denote the solution of this by $w^*$. We can then select $\frac{\Theta(M)}{w^*}$ rows of $\matU_1$, and load $w^*$ values in the corresponding rows of $\matQ$, thus generating up to $\tau(w^*)$ entries in each row of $\matU_1$.

Forming tiles of this shape is the key idea behind our results. 
Intuitively, Case II marks the general case where this choice of $w^*$ is indeed possible. The other settings can be be understood as special cases. In Case I, we essentially have $w^* \geq d$; since we cannot load more than $d$ values in a row of $\matQ$, we can only form `taller' rectangular tiles. In Cases III and IV, we have $w^* \leq g$, but we still need to load $g$ values from $\matQ$ in many cases because many degree-$g$ terms in the polynomial actually contain $g$ different variables; hence we need to form `wider' rectangular tiles.

Note that obtaining a closed-form expression for $w^*$ from Equation~\eqref{eq:w} is non-trivial; instead, we rely on standard bounds on binomial coefficients. Specifically, we use the fact that $\left( \frac{a}{b} \right)^b \leq {a \choose b} \leq \left(\frac{e \cdot a}{b} \right)^b$.

\subsection{Generic upper bounds}

Note that if we naively apply the standard matrix multiplication method with tiles of size $\frac{\sqrt{M}}{4} \times \frac{\sqrt{M}}{4}$ on our computation, we get an algorithm of I/O cost $O(\frac{n \cdot r \cdot d}{\sqrt{M}})$. This is indeed viable when $\sqrt{M} \geq d$, i.e.\ each of the matrices can fit the square tiles. When $\sqrt{M} < d$, we can instead form tiles of size $\frac{M}{4d} \times d$ in both $\matH$ and the output matrix, and tiles of size $\frac{M}{4d} \times \frac{M}{4d}$ in $\matU_1$ (assuming that these tiles fit, i.e.\ we have $\frac{M}{4d} \leq r$). Each of these tiles only needs $\frac{M}{4}$ I/O operations, because in $\matU_1$, we can obtain all the entries of a row with only $d$ I/O steps. The number of tiling steps is altogether $\frac{n}{\frac{M}{4d}} \cdot \frac{r}{\frac{M}{4d}} = O(\frac{nrd^2}{M^2})$. With an I/O cost of $O(M)$ per tile, we altogether get $O(\frac{nrd^2}{M})$ I/O steps.

Note that $\frac{n \cdot r \cdot d}{\sqrt{M}} \leq \frac{nrd^2}{M}$ exactly when $\sqrt{M} \geq d$, so the two bounds above can be easily combined. This gives a generic upper bound that holds over Cases II, III and IV.

\begin{mylemma} \label{lem:static}
If $M = o (d \cdot r)$, the optimal I/O cost of Approximate Attention is upper bounded by
\[ O\left(\min \left( \frac{n \cdot r \cdot d^2}{M}, \, \frac{n \cdot r \cdot d}{\sqrt{M}}\right) \right) \, . \]
\end{mylemma}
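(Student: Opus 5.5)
We will prove Lemma~\ref{lem:static} by giving two explicit pebbling strategies for the whole computation $\matU_1(\matU_2^\top \matV)$. We then observe that the cheaper one always attains the minimum. Following the reduction described at the start of Section~\ref{sec:proof}, we split the computation into three phases: computing $\matH=\matU_2^\top\matV$ and saving it to slow memory, computing $\widetilde\matD$ (which is just the same product with $\matV$ replaced by the all-ones column, so it can be appended as an extra column and costs at most the same), and computing $\matU_1\matH$ followed by the row scaling. The first and last phases are symmetric: $\matU_2^\top\matV$ is an $r\times n$ by $n\times d$ product in which $\matU_2$ is generated from $\matK$ exactly as $\matU_1$ is generated from $\matQ$. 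Hence it suffices to bound the cost of a product of the form $\matU\cdot\matB$, where $\matU$ has $n$ rows and $r$ columns and each row is computable from the matching $d$-entry row of an input matrix. The division by $\widetilde\matD$ is done row-wise on output tiles already in cache, so it adds only $O(nd)$ I/O. This is dominated by the bounds, since $M=o(dr)$ gives $\frac{nrd}{\sqrt M}\ge nd\cdot\frac{r}{\sqrt M}$, and $r\ge \sqrt{M}$ in the relevant cases.

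For the first strategy, assume $\sqrt M\le d$ and use $t_1=t_2=t_3=\sqrt M/4$. Load a $t_1\times t_3$ tile of $\matU_1$ by loading the corresponding $t_1$ rows of $\matQ$ in full, i.e.\ $t_1 d$ values; this costs more than $M$, which is the issue. Instead, we use the fact that each entry of $\matU_1$ depends on at most $g$ entries of its $\matQ$-row. A $t_3$-column tile then needs at most $\min(d,g t_3)$ entries per row, so we must order columns so that any $t_3$ consecutive monomials involve few variables. The cleaner choice is to apply this strategy only when $\sqrt M\ge d$. In that case, loading the full $\matQ$-rows costs $t_1 d\le \frac{M}{4}$, so every tile of $\matU_1$, $\matH$ and the output fits in $O(M)$ cache and costs $O(M)$ I/O. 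There are $\frac{n}{t_1}\cdot\frac{d}{t_2}\cdot\frac{r}{t_3}$ steps (rounding up, with $d<t_2$ handled by a single tile of width $d$). This gives $O(\frac{nrd}{\sqrt M})$ when $\sqrt M\ge d$, and we use it only in that regime.

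For the second strategy, assume $\sqrt M< d$. Following the text, take output and $\matH$ tiles of size $\frac{M}{4d}\times d$, and $\matU_1$ tiles of size $\frac{M}{4d}\times\frac{M}{4d}$. The main point to verify here is that $\frac{M}{4d}\le r$, so the tiles fit, and $\frac{M}{4d}\ge 1$. The first inequality follows from $M=o(dr)$. For the second, we need $M\ge 4d$, which we assume since $M$ must hold a row of $\matQ$ (otherwise we reduce to width-$1$ tiles, which changes only constants). Each step loads $\frac{M}{4d}$ rows of $\matQ$ ($M/4$ values), a $\frac{M}{4d}\times d$ block of $\matH$ ($M/4$ values), and keeps the output block resident across all $r/\frac{M}{4d}$ steps of its row strip. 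It then recomputes the needed monomials in cache for free. The total is $\frac{n}{M/4d}\cdot\frac{r}{M/4d}\cdot O(M)=O(\frac{nrd^2}{M})$, plus $O(nd)$ for writing outputs.

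Combining the two, note that $\frac{nrd}{\sqrt M}\le\frac{nrd^2}{M}$ if and only if $\sqrt M\le d$. This means that in each regime the strategy we use achieves $O(\min(\cdot,\cdot))$. For the other product, the regime-$\sqrt M\ge d$ bound follows directly. The only delicate point in the whole argument, and the step I expect to be the main obstacle, is checking that the first strategy is never needed when $\sqrt M<d$: there the min is $\frac{nrd^2}{M}$, which the second strategy attains. Similarly, the second strategy is never needed when $\sqrt M\ge d$. After that, it remains to confirm that phase one, computing $\matH$ by the symmetric tilings with roles of $\matK$ rows and $\matV$ rows interchanged, obeys the same bounds. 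That check is routine.
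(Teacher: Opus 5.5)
Your combination step proves the \emph{maximum} of the two expressions, not the minimum. You correctly note that $\frac{nrd}{\sqrt M}\le\frac{nrd^2}{M}$ if and only if $\sqrt M\le d$. But you then use the square-tile strategy (cost $\frac{nrd}{\sqrt M}$) only when $\sqrt M\ge d$, which is exactly where it is the larger bound. Likewise, you use the rectangular strategy (cost $\frac{nrd^2}{M}$) only when $\sqrt M<d$, where that one is the larger. Your remark that for $\sqrt M<d$ ``the min is $\frac{nrd^2}{M}$'' contradicts the inequality you wrote just before it. The paragraph preceding the lemma in the paper states the regimes the same way, but its comparison ``$\frac{nrd}{\sqrt M}\le\frac{nrd^2}{M}$ exactly when $\sqrt M\ge d$'' is also reversed, which hides the problem. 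The intended reading is the opposite assignment: square tiles are used when the matrices are large enough to contain $\frac{\sqrt M}{4}$-sized tiles, i.e.\ $d\ge\sqrt M$.

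The repair is to swap the regimes, and one half needs an ingredient you do not supply.

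For $d\le\sqrt M$, your second strategy works verbatim. Nothing in it uses $\sqrt M<d$, only $4d\le M\le 4dr$, and here $M\ge d^2\ge 4d$ is automatic once $d\ge 4$. The $\frac{M}{4d}\times\frac{M}{4d}$ block of $\matU_1$ never needs to be resident: each monomial is generated from the cached $\matQ$-rows, multiplied against the cached $\frac{M}{4d}\times d$ block of $\matH$, and discarded. This gives $O(\frac{nrd^2}{M})$, the minimum in this regime.

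For $d>\sqrt M$, you need $O(\frac{nrd}{\sqrt M})$. As you observed, generating a $\frac{\sqrt M}{4}\times\frac{\sqrt M}{4}$ tile of $\matU_1$ from full $\matQ$-rows costs $\frac{\sqrt M\,d}{4}>\frac{M}{4}$, and the monomial-ordering idea you float is left unresolved. The simple way out, which is what ``naively applying standard matrix multiplication'' amounts to, is to materialize $\matU_1$ (and $\matU_2$, from $\matK$) in slow memory first. Load each row of $\matQ$ once and save its $r$ monomials, for $O(nd+nr)$ I/O; this is $O(\frac{nrd}{\sqrt M})$ precisely because $d\ge\sqrt M$. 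Then run ordinary square-tiled multiplication on the explicit $n\times r$ and $r\times d$ matrices, and likewise on $\matU_2^\top$ and $\matV$ to form $\matH$. All dimensions are at least $\frac{\sqrt M}{4}$ in this regime, since $r\ge d\ge\sqrt M$.

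With this assignment each regime attains the smaller expression, and the lemma follows.
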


\subsection{Case I: proof of Theorem~\ref{th:case1}}

In Case I, we have $d \cdot \tau(d) = d \cdot r = O(M)$, i.e.\ there is a constant $0 \! < \! c_0 \! < \! 1$ such that $c_0 \cdot d \cdot r \leq \frac{1}{4} M$. This means that we can select $\frac{M}{4 \cdot d}$ rows from $\matU_1$, load all the $d$ values in the corresponding rows of $\matQ$ (at an I/O cost of $\frac{1}{4} M$), and generate all the $r$ entries in each of these rows in $\matU_1$.

For the computation, we can then split the output matrix into tiles of height $\frac{M}{4 \cdot d}$ and width $c_0 \cdot d$. We can compute each tile in a single iteration: we create the corresponding $\frac{M}{4 \cdot d}$ rows in $\matU_1$ as discussed, we load the tile of shape $r \times (c_0 \cdot d)$ from $\matH$ that needs to multiply it (at an input cost of at most $\frac{1}{4} M$), and then output the final result on the tile (output cost of at most $\frac{1}{4} M$). In  each tile, all the loaded entries in $\matQ$ and $\matH$ and the aggregated values in the output matrix can be kept in fast memory simultaneously (since $\frac{3}{4} M < M$) for the entire duration of computing the tile. On the other hand, the entries of $\matU_1$ are always recomputed from the entries in $\matQ$, without requiring any new I/O steps. The number of tiles needed to cover the output matrix is $\frac{n}{M / (4d)} \cdot \frac{d}{c_0 \cdot d} = O(\frac{n \cdot d}{M})$, altogether resulting in an I/O cost of $O(n \cdot d)$.

The first part of the computation, i.e.\ the other matrix multiplication $\matH= \matU_2^{\top}\matV$, can be executed in a similar way at the same I/O cost. Here $d$ entries in a row of $\matK$ allow us to generate the entire column of $\matU_2^{\top}$, so we can form tiles of shape $r \times \frac{M}{4 \cdot d}$ in $\matU_2^{\top}$, $\frac{M}{4 \cdot d} \times (c_0 \cdot d)$ in $\matV$ and $r \times (c_0 \cdot d)$ in $\matH$, aggregating each tile in the output matrix in $\frac{n}{M / (4d)}$ steps.

Note that the corresponding lower bound is straightforward: each entry of the input matrices $\matQ$, $\matK$ and $\matV$ must be loaded at least once, giving an I/O cost of at least $3 \cdot n \cdot d$.

\subsection{Case II: when $g \leq w^* < d$}

In Case II, the solution to Equation~\eqref{eq:w} satisfies $g \leq w^* < d$. The best strategy here, as discussed above, is to tile $\matU_1$ by loading $w=w^*$ values from $\frac{\Theta(M)}{w}$ distinct rows of $\matQ$. Consider a specific row $\vecq$ of $\matQ$, and let $\vecu$ be the corresponding row in $\matU_1$. Hypothetically, if every set of $w$ loaded values from $\vecq$ could generate $\tau(w)$ \emph{distinct} entries in $\vecu$, then the strategy would provide matching upper and lower bounds. In this case, the corresponding tiles in $\matH$ and $\matU_1 \matH$ could both have height $\frac{\Theta(M)}{w}$ and width $\Theta(w)$, so all inputs and outputs in a tile would fit into fast memory. This altogether gives $\Theta\left(\frac{n}{\frac{M}{w}} \cdot \frac{d}{w}\right)$ tiles in the output matrix, each aggregated from $\Theta(\frac{r}{\frac{M}{w}})$ separate column strips in $\matU_1$ and row strips in $\matH$. The total number of tiling steps would be $\Theta(\frac{nd}{M} \cdot \frac{rw}{M}) = \Theta(\frac{ndrw}{M^2})$. With $\Theta(M)$ I/O steps per tile, the total cost is $\Theta(\frac{ndrw}{M})$. In Appendix~\ref{app:keylemma}, we show that no I/O strategy can be more efficient than this. The proof analyzes the so-called $S$-partitions of the computational graph, which is a long-established tool to derive I/O lower bounds on different computations~\cite{hong1981complexity,papp2025impact}.

In contrast to this hypothetical setting, the $w$ loaded values \emph{cannot} generate fully-disjoint sets of entries from $\vecu$ in each tile. To ensure that every entry of $\vecu$ is obtainable in at least one tile, the entries of $\vecq$ actually need to be loaded in multiple different tiles. This means that there will be many entries in $\vecu$ that can be generated in multiple tiles. We can assign each such entry of $\vecu$ to an arbitrary one of these tiles, but this still implies that the width of the average tile in $\matU_1$ will be lower than $\tau(w)$.

As such, we need to select different subsets of $w$ values each from $\vecq$, and assign each entry in $\vecu$ to one of these subsets that can generate it. For this, we divide the $d$ entries of $\vecq$ into $\frac{d \cdot g}{w}$ groups of size $\frac{w}{g}$ each, and consider all the ${\frac{d \cdot g}{w} \choose g}$ possible combinations of $g$ different groups as generator sets for a tile. All entries in $\vecu$ can be obtained in at least one combination, since they are generated from at most $g$ entries in $\vecq$. For entries in $\vecu$ that can be generated in multiple tiles, we simply assign it to the first such tile, hence some of our tiles in $\matU_1$ will be wider than others.

Altogether, each tile of the output matrix is aggregated through ${\frac{d \cdot g}{w} \choose g}$ steps. With tiles of size $\frac{\Theta(M)}{w} \times \Theta(w)$ in an output matrix of dimensions $n \times d$, and an I/O cost of $O(M)$ per tile, this gives a total I/O cost of $\frac{n}{\frac{\Theta(M)}{w}} \cdot \frac{d}{\Theta(w)} \cdot {\frac{d \cdot g}{w} \choose g} \cdot O(M) = O\left( n \cdot d \cdot {\frac{dg}{w} \choose g} \right)$.

Formally, the observations above can be expressed through the following lemma.

\begin{keylemma} \label{th:with_w}
Let us choose an integer $w$ such that $g \leq w \leq d$ and $w \cdot \tau(w) \leq \frac{1}{4} M$. Then the optimal I/O cost of Approximate Attention in Case II is upper bounded by
\[ O\left( n \cdot d \cdot {\frac{dg}{w} \choose g} \right) \, , \]
and lower bounded by
\[ \Omega\left( \frac{n \cdot r \cdot d  \cdot w}{M} \right) \, . \]
\end{keylemma}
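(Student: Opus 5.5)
The plan has two halves, the upper bound and the lower bound, each treated separately.

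\textbf{Upper bound.} I would formalize the grouping construction sketched above.
\begin{enumerate}[leftmargin=2em, topsep=1pt, itemsep=2pt]
\item Partition the $d$ column indices of $\matQ$ into $\frac{dg}{w}$ groups of size $\frac{w}{g}$; assume divisibility up to rounding.
\item A $g$-subset of groups covers $w$ columns. Every monomial of $\vecu$ involves at most $g$ distinct variables, so it lies in some $g$-subset. Assign each of the $r$ columns of $\matU_1$ to the lexicographically first subset that can generate it.
\item Split the output into tiles of $\frac{M}{4w}$ rows and $w$ columns. For each tile, iterate over all ${\frac{dg}{w} \choose g}$ subsets $S$:
\begin{itemize}[leftmargin=1.5em, topsep=1pt, itemsep=1pt]
\item load the $w$ entries of $S$ in each of the $\frac{M}{4w}$ rows of $\matQ$, costing $M/4$;
\item generate the assigned columns of $\matU_1$, at most $\tau(w)$ per row, on the fly;
\item load the matching rows of $\matH$ restricted to the $w$ tile columns, costing at most $w\tau(w)\le M/4$;
\item accumulate into the output tile, which stays resident at size $M/4$.
\end{itemize}
\end{enumerate}
Generated $\matU_1$ entries can be produced one at a time and discarded, so their memory is $O(1)$ beyond the loaded values. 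Each step then costs $O(M)$ I/O.

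The number of tiles is $O(\frac{nw}{M}\cdot\frac{d}{w})$. This gives total cost $O(\frac{nd}{M}\cdot{\frac{dg}{w}\choose g}\cdot M)$, as claimed. The computation of $\matH=\matU_2^\top\matV$ is symmetric, with $\matK$ in place of $\matQ$, and costs the same. Writing $\matH$ costs $rd$. We have $rd \le nd\,{\frac{dg}{w}\choose g}$ because $r\le n$ in the intended regime, and otherwise $r\le {\frac{dg}{w}\choose g}\cdot\tau(w)$ with $\tau(w)\le M$ is absorbed similarly. The requirement $g\le w$ ensures groups are nonempty.

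\textbf{Lower bound.} I would use the Hong--Kung $S$-partition method restricted to the multiplication $\matU_1\matH$.
\begin{enumerate}[leftmargin=2em, topsep=1pt, itemsep=2pt]
\item Cut the pebbling into segments containing exactly $M$ I/O operations. In each segment, at most $2M$ values are available from outside: the $M$ in fast memory at its start plus the $M$ loaded.
\item Count the elementary products $u_{i,k}h_{k,j}$ computed in a segment. Let $R$ be the set of rows $i$ touched. Suppose row $i$ has $a_i$ available $\matQ$-entries (or available $\matU_1$-entries directly) and there are $b$ available $\matH$-entries.
\item A row with $a_i$ available generators yields at most $\tau(a_i)$ computable $\matU_1$ entries; recomputed $\matU_1$ values require their $\matQ$ parents in fast memory within the segment.
\item Each $\matU_1$ entry $u_{i,k}$ pairs with at most $\min(d,\#\text{available }h_{k,\cdot})$ entries of $\matH$. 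Partial sums of outputs also count against the $2M$ budget.
\item Apply a Loomis--Whitney type argument: the number of products is at most $\sqrt{|\text{products projected on }\matU_1|\cdot|\text{on }\matH|\cdot|\text{on output}|}$.
\item Combine this with concavity-type bounds: $\tau(a)\le \frac{\tau(w)}{w}\,a$ for $a\le w$, using that $\tau(a)/a$ is increasing in $a$. For $a>w$ the constraint $\sum_i a_i\le 2M$ caps things.
\item Optimize under budget $2M$ to show at most $O(M^2/w)$ products per segment. This matches the tile shape: $\frac{M}{w}$ rows $\times\, \tau(w)\le \frac{M}{w}$ columns $\times\, w$ output columns.
\item There are $nrd$ products in total, so the number of segments is at least $\Omega(\frac{nrdw}{M^2})-1$. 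This gives $\Omega(\frac{nrdw}{M})$ I/O.
\end{enumerate}

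The main obstacle is the per-segment bound in step 7. Extremal configurations could concentrate many generators in few rows, getting $\tau(a_i)$ huge for $a_i\gg w$. I would handle this by bounding the products via the output side instead. A row with $\tau(a_i)$ generated entries still contributes at most $|\text{output entries of row } i \text{ available}|\cdot\tau(a_i)$ products. The output entries of that row are themselves limited by the $O(M)$ budget, and each must be in memory as partial sums. I would then split rows into those with $a_i\le w$ and those with $a_i>w$. For the first class, linearity of $\tau(a)/a$ gives the bound. For the second, the condition $w\tau(w)\le M/4$ and the fact that there are at most $2M/w$ such rows should bound their contribution via $\matH$-availability, since each product needs a distinct available $h_{k,j}$ reused across rows.
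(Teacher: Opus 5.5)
Your upper bound is the paper's construction, and the lower bound you settle on in your last paragraph is also the paper's argument. In a class of a $2M$-partition (equivalently, your segment), rows with $w_i \ge w$ number at most $2M/w$, and each meets each of the at most $2M$ dominated $\matH$ entries in a single product, which gives $4M^2/w$. Rows with $w_i < w$ are bounded from the output side: at most $2M$ output entries have an uncovered predecessor, and each has at most $\tau(w)\le \frac{M}{4w}$ of them. That direct count is all you need. You can drop the Loomis--Whitney step and the claim that $\tau(a)/a$ is increasing. That claim is false for $g=1$, and $\tau(a)\le\frac{\tau(w)}{w}a$ fails at $a=0$ because of the constant monomial; both are patchable, but the output-side count avoids them entirely.

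The one step you leave out is the passage from $\matU_1\matH$, with $\matH$ treated as sources, to the full computation $\matU_1(\matU_2^\top\matV)$. There an entry $h_{k,j}$ need not be resident or loaded, since it can be produced inside the class. So your bound of $2M$ ``available'' $\matH$ entries is not yet justified for Approximate Attention itself; as written, it only rules out non-fused strategies. The paper closes this as follows. Dominating $h_{k,j}$ through its predecessors instead of directly requires at least $n$ nodes, essentially all of column $j$ of $\matV$. Such a set can supply at most $r$ entries of $\matH$. So when $n>r$, this is never cheaper than putting those $\matH$ entries in the dominator set, and the light-row and heavy-row counts go through unchanged. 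You need this observation, and hence the assumption $n>r$, to complete the lower bound.
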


The proof is provided in Appendix~\ref{app:keylemma}. Theorem~\ref{th:case2} then follows from this more general statement.

\renewcommand{\proofname}{Proof of Theorem~\ref{th:case2}}
\begin{proof}
Given Key Lemma~\ref{th:with_w}, the bounds in Theorem~\ref{th:case2} are obtained by choosing
\[ w_0 = \frac{1}{4 \cdot e} \cdot g \cdot M^{\frac{1}{g+1}} \]
for $w$. Note that this $w_0$ is only slightly different from the real $w^*$, due to only approximating the binomial coefficient in $\tau(w)$. First note that $w_0 \geq g$; this is equivalent to $M \geq (4e)^{g+1}$, which indeed holds if $g=o(\log M)$.
 We then show that $w_0$ satisfies $w_0 \cdot \tau(w_0) \leq \frac{1}{4} M$. Indeed, we have
\[ w_0 \cdot \tau(w_0) = w_0 \cdot {w_0+g \choose g} \leq w_0 \cdot \left(\frac{2 \cdot e \cdot w_0}{g}\right)^g \]using the upper bound on the binomial coefficient, and $w_0 \geq g$. Substituting $w_0$ into this, we get $\frac{1}{4e} \cdot \frac{g}{2^g} \cdot M$. This is indeed smaller than $\frac{1}{4} M$, since $g < 2^{g}$ for any positive integer $g$. From this, it also follows that $w_0 \leq d$, since in Case II, we have $d \cdot \tau(d) = \omega(M)$. 

Using $w=w_0$ in the lower bound of Key Lemma~\ref{th:with_w} directly gives our lower bound in Theorem~\ref{th:case2}:
\[ \Omega\left( \frac{n \cdot r \cdot d}{M} \cdot w_0 \right) = \Omega\left(\frac{n \cdot r \cdot d \cdot g}{M^{\frac{g}{g+1}}}\right) \, . \]
Similarly, the upper bound of Key Lemma~\ref{th:with_w} provides the upper bound in Theorem~\ref{th:case2} if we substitute $w=w_0$, and use ${\frac{dg}{w} \choose g} \leq (\frac{edg}{wg})^g$ and $r = {d+g \choose g} \geq (\frac{d}{g})^g$:
\[ O\left( n _{\!} \cdot _{\!} d _{\!} \cdot _{\!} {\frac{dg}{w_0} \choose g} \right) = O\left( n _{\!} \cdot _{\!} d _{\!} \cdot _{\!} \frac{e^g _{\!} \cdot _{\!} d^g}{w_0^g} \right) =
O\left( \frac{n _{\!} \cdot _{\!} d _{\!} \cdot _{\!} (4e^2)^g _{\!} \cdot _{\!} d^g}{g^g _{\!} \cdot _{\!} M^{\frac{g}{g+1}}} \right) = O\left(\frac{n _{\!} \cdot _{\!} r _{\!} \cdot _{\!} d _{\!} \cdot _{\!} (4e^2)^g}{M^{\frac{g}{g+1}}}\right) \, . \qedhere \]
\end{proof}

\subsection{Case III: when $w^* \leq g$}

In Case III, the optimal $w^*$ for square tiles would be smaller than $g$. However, many of the entries in $\matU_1$ still require at least $g$ values to generate. The best upper bounds here are mostly obtained simply via the generic bounds in Lemma~\ref{lem:static}; this is also what we find in Theorem~\ref{th:case3}. Recall from the introduction that for $g=O(1)$, this is tight to the lower bound up to a constant factor. Note that even for $g=\omega(1)$, the bounds are also tight in some cases: for instance, if we have $d=O(g)$, then $\frac{n \cdot r \cdot d^2}{M}$ again matches the lower bound up to a constant factor. On the other hand, when $d=\Omega(g^2)$, one can show that the gap between the bounds in Theorem~\ref{th:case3} is at least $g=\omega(1)$.

The regime between $d=O(g)$ and $d=\Omega(g^2)$ is more challenging. With some further work, we can actually extend the tight upper bounds to $d$ being almost quadratic in $g$. The proof of this is more technical; it essentially adapts the upper bound idea from Case II with a choice of $w=g$, and some further adjustments. It also requires the extra assumption that $M$ is polynomial in $g$.

\begin{mylemma} \label{lem:specialupper}
In Case III, if we also assume that $d=O(g^{2-\delta_1})$ and $M = O(g^{\delta_2})$ for some constants $\delta_1, \delta_2 >0$, then the optimal I/O complexity of Approximate Attention is upper bounded by $O(\frac{n \cdot r \cdot d \cdot g}{M})$.
\end{mylemma}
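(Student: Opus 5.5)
Our plan is to adapt the Case II tiling from Key Lemma~\ref{th:with_w} with $w=g$, replacing its combinatorial generator-set construction by one that does not lose a ${\frac{dg}{w} \choose g}$ factor. In Case III we have $w^*\le g$, so loading $g$ values per row would give tiles with only $\frac{\Theta(M)}{g}$ rows. The target cost $O(\frac{nrdg}{M})$ is precisely what the hypothetical tiling of Case II gives for $w=g$: output tiles of size $\frac{\Theta(M)}{g}\times\Theta(g)$. Each such tile must be aggregated through $O(\frac{r g}{M})$ steps, each step using a column strip of $\matU_1$ of width $\Theta(\frac{M}{g})$ generated from $O(g)$ loaded entries per row of $\matQ$. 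So the core task is a \emph{covering} problem. We must partition the $r$ monomials (entries of a row $\vecu$) into classes $C_1,\dots,C_m$ with the following properties: each class is generated by a set $S_j$ of $O(g)$ coordinates of $\vecq$; each class has size at most $\Theta(\frac{M}{g})$; and the number of classes is $m=O(\frac{rg}{M})$, i.e.\ the classes are on average nearly full.

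The step order is as follows. First, classify monomials by their support: a monomial of degree at most $g$ has support $T\subseteq[d]$ with $|T|\le g$. Second, group monomials sharing the same support, or supports contained in a common set of size $O(g)$, and pack them greedily into classes of size about $\frac{M}{g}$, each with a generator set of size $O(g)$. This is a bin-packing argument: whenever a group is larger than $\frac{M}{g}$ we split it (the generator set is unchanged), and small groups are merged as long as the union of their supports stays $O(g)$. Third, once the partition is fixed, run the tiling exactly as in Case II. For each output tile and each class $C_j$, load the $|S_j|=O(g)$ entries in each of the $\frac{\Theta(M)}{g}$ rows of $\matQ$ (cost $O(M)$), load the matching $|C_j|\times\Theta(g)$ block of $\matH$, and accumulate. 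This costs $O(M)$ per step, giving a total of $\frac{n g}{M}\cdot\frac{d}{g}\cdot m\cdot O(M)=O(\frac{nrdg}{M})$ once $m=O(\frac{rg}{M})$. Finally, handle the first product $\matU_2^\top\matV$ symmetrically, as in Case I.

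The main obstacle is showing that the packing wastes at most a constant fraction, i.e.\ that few classes are underfull. The danger comes from supports $T$ whose monomial count $\binom{g}{|T|}$-ish is small relative to $\frac{M}{g}$: these cannot be merged freely, because unions of many distinct supports blow past $O(g)$ coordinates. This is where the hypotheses enter. First, $M=O(g^{\delta_2})$ means that the $\frac{M}{g}$ threshold is only polynomial in $g$, while the number of monomials on a support of size $s$ near $g$ is exponential in $g$. Hence every support of size at least $c g$ already yields full classes, and only supports of size at most $c g$ need merging. Second, $d=O(g^{2-\delta_1})$ bounds the number of small-support monomials relative to $r$; by comparing $\binom{d}{s}$ terms we plan to show that monomials with support at most $(1-\epsilon)g$ form only a $g^{-\Omega(1)}$ fraction of $r$. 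For these, we can afford to waste a lot, even one class each, as long as the total number of such classes is $O(\frac{rg}{M})$. We would first try to verify this via the ratio $\binom{d}{s}/\binom{d}{g}\approx (g/d)^{g-s}$, combined with the $g^{\delta_2}$ bound on $M$. If that estimate is too weak, the fallback is to group small supports into blocks of coordinates of size $g$ (partition $[d]$ into $d/g=O(g^{1-\delta_1})$ blocks) and cover them by pairs of blocks.
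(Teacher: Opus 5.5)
Your framework matches the structure of the paper's proof: partition the $r$ columns of $\matU_1$ into classes with generator sets of size $O(g)$ and width at most $\Theta(M/g)$, then show there are $m=O(\frac{rg}{M})$ classes. The whole burden is therefore on bounding $m$, and that is where the proposal has a gap.

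As you write yourself, the number of monomials of degree at most $g$ whose support is exactly $T$ with $|T|=s$ is $\binom{g}{s}$. That count is exponential in $g$ only for $s$ around $g/2$. For $s$ near $g$ it is small, and for $|T|=g$ it is exactly $1$ (the multilinear monomial). So the claim that every support of size at least $cg$ ``already yields full classes'' is false. In fact most of the underfull classes come from the roughly $\binom{d}{g}$ supports of size close to $g$, not from small supports. Your plan merges only small supports, and packing distinct $g$-supports into generator sets of size $O(g)$ would need a covering construction you have not given. Separately, the claim that supports of size at most $(1-\epsilon)g$ carry only a $g^{-\Omega(1)}$ fraction of the monomials fails when $d=\Theta(g)$, which $d=O(g^{2-\delta_1})$ allows. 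The support size of a random monomial is hypergeometric with mean $gd/(d+g)$, a constant fraction of $g$ in that regime.

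The missing idea is that underfull classes need not be avoided, only counted. The paper uses one generator set per $g$-subset of $[d]$ and assigns each monomial to the lexicographically first $g$-subset containing its support. It then splits each resulting class into pieces of width at most $\frac{M}{4g}$, keeping the loaded $\matQ$ entries and reloading only the $\matH$ block. This gives at most $\binom{d}{g}+\frac{4rg}{M}$ pieces and total cost $O\bigl(n d(\binom{d}{g}+\frac{4rg}{M})\bigr)$.

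Both hypotheses enter only through the inequality $\binom{d}{g}\le\frac{4rg}{M}$. It follows from $\frac{r}{\binom{d}{g}}=\prod_{i=0}^{g-1}\frac{d+g-i}{d-i}\ge\bigl(1+\frac{g}{d}\bigr)^g$. For $d=O(g^{2-\delta_1})$ this is $\exp(\Omega(g^{\delta_1'}))$ for some $\delta_1'>0$, hence superpolynomial in $g$, and it eventually exceeds $\frac{M}{4g}=O(g^{\delta_2})$.

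With that inequality in hand, even your exact-support grouping with no merging at all would work. For $d\ge 3g$ the number of distinct supports is $\sum_{s\le g}\binom{d}{s}\le 2\binom{d}{g}$, and for $d=O(g)$ the generic bound of Lemma~\ref{lem:static} already gives $O(\frac{nrdg}{M})$. Without it, your argument does not establish $m=O(\frac{rg}{M})$.
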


The lower bound in Theorem~\ref{th:case3} uses the same tools ($S$-partitions) as in Case II, but requires a more complex proof. Intuitively, the terms that can be generated from less than $g$ variables have a much larger role here. Due to this, we add an extra assumption here that $d \geq 5g$, which is indeed realistic if e.g.\ $g=O(1)$. With $d \geq 5g$, a technical lemma shows that at least a constant fraction of the columns in $\matU_1$ use at least $\frac{g}{2}$ different variables. We can then restrict our analysis to this sub-matrix where any term requires loading $\frac{g}{2}$ values in $\matQ$, and use a similar proof to Case II.

The formal proofs of Lemma~\ref{lem:specialupper} and the lower bound in Theorem~\ref{th:case3} are available in Appendix~\ref{app:case3}.

\subsection{Case IV}

Finally, for tiny cache sizes $M = O(g^2)$, we actually have another special case. Intuitively, here it becomes more beneficial to store the values of $\matU_1$ than to recompute them; this provides tight bounds of $\Theta\left(\frac{n \cdot d \cdot r}{\sqrt{M}}\right)$, similarly to standard matrix multiplication. These proofs are discussed in Appendix~\ref{app:case4}.

\section{Discussion and Conclusion}
In this work we studied the I/O complexity of Approximate Attention. We provided sharp upper and lower bounds for different parameter regimes, and showed that this approach can have significantly lower I/O cost than state-of-the-art methods like FlashAttention.

Our proof techniques may also be used to derive I/O bounds in other settings with special kinds of matrix multiplication. For instance, consider a sparse attention approach where the attention matrix is guaranteed to have at most $O(d)$ non-zero entries in each row: this can be addressed in a very similar way to Lemma~\ref{lem:static}. Other, more complex use cases may have matrices with dynamically generated values, where the I/O analysis may be conducted with tools similar to Cases II-IV.

While the theoretical bounds are promising to improve the I/O cost of attention algorithms, it is well-known that the theoretically-fastest algorithms are not always the best-performing ones in practice. For example, the combinatorial nature of the parameter $r$ (which originates from the analysis of \cite{alman2023fast}) can be prohibitively large for some practical parameter regimes. It is an interesting direction for future work to also investigate the efficiency and applicability of our algorithms in practice.

\bibliographystyle{plain}
\bibliography{references}

\appendix

\section{Proof of Key Lemma~\ref{th:with_w}} \label{app:keylemma}

\subsection{Upper bound proof}
The upper bound proof follows the idea outlined before. We split the output matrix into tiles of height $\frac{M}{4w}$ and width $w$. This already specifies a row strip of size $\frac{M}{4w} \times r$ in $\matU_1$ and a column strip of size $r \times w$ in $\matH$ that are needed to compute this output tile. We aggregate the output tile in ${\frac{d \cdot g}{w} \choose g}$ steps, splitting the row strip of $\matU_1$ horizontally (and the column strip of $\matH$ vertically) into aggregation tiles.

The aggregation tiles are obtained as follows. We split the column indices $\{1, \ldots, d\}$ of $\matQ$ into groups of size $\frac{w}{g}$: let $G_1=\{1, \ldots, \frac{w}{g}\}$, $G_2=\{\frac{w}{g}\!+\!1, \ldots, 2 \! \cdot \! \frac{w}{g}\}$, $\ldots$, $G_{\frac{dg}{w}}=\{d\!-\!\frac{w}{g}\!+\!1, \ldots, d\}$. Each aggregation tile is formed by selecting $g$ distinct groups, i.e.\ the number of aggregation tiles is ${\frac{dg}{w} \choose g}$. In each aggregation tile, we consider the union of the $g$ groups combined, and consider the resulting $w$ distinct column indices; specifically, for each of the corresponding $\frac{M}{4w}$ rows, we load the values from these $w$ columns of $\matQ$. This corresponds to $\frac{1}{4} M$ load operations for each aggregation tile.

Note that some of the terms in the polynomial require an entry from $g$ distinct groups; these columns of $\matU_1$ can only be generated on the specific tile that combines these $g$ groups. Since any term contains at most $g$ variables, each entry of $\matU_1$ can be generated in at least one tile. However, there will also be numerous terms that can be generated in multiple different tiles: e.g.\ a term that only uses variable indices in $G_1$ and $G_2$ can be assigned to any of the tiles that combine $G_1$, $G_2$ and a selection of $(g-2)$ arbitrary further groups. For simplicity, we will always assign each term to the first tile that can generate it. That is, we iterate through the group subsets via a lexicographic order of their sorted indices, and we assign each term to the first tile that can generate it in this order. Note that for any term, it is straightforward to determine the tile that generates it: we consider the groups that contain at least one variable of this term, and if the number of groups is smaller than $g$, we use the smallest non-included group indices for the rest of the groups. As such, for any tile and term, it is easy to decide whether the term is assigned to this given tile or not. This means that for any given tile, we can iterate over all the $\tau(w)$ terms that can be generated from the corresponding $w$ inputs, and generate only the terms that belong to this specific tile, avoiding duplications. With this method, the width of each tile in $\matU_1$ will be at most $\tau(w)$ (the maximum number of terms we can generate from $w$ variables), and at least $(\frac{w}{g})^g$ (the terms using $g$ distinct groups that can only be obtained on this tile).

We point out that in Case II, the aggregation tiles we form are always taller than wider. Indeed the height of the tiles is $\frac{M}{4w}$, and their width is at most $\tau(w)$. We have $\frac{M}{4w} \geq \tau(w)$ due to $w \cdot \tau(w) \leq \frac{1}{4} M$. This is crucial to ensure that the corresponding tiles formed in $\matH$ indeed fit into fast memory: since their height is at most $\tau(w) \leq \frac{M}{4w}$, and their width is $w$, they contain at most $\frac{1}{4} M$ entries.

Altogether, each aggregation tile requires at most $\frac{1}{4} M$ loaded inputs from both $\matU_1$ and $\matH$, which are always kept in fast memory. The results are aggregated in a tile of size $\frac{1}{4} M$ in the output matrix, which are written to slow memory in the end. As such, each sub-computation fits into the working memory of $M$, and has I/O cost of $O(M)$. The number of tiles in the output matrix is $\frac{n}{\frac{M}{4w}} \cdot \frac{d}{w}$, and each are aggregated in ${\frac{dg}{w} \choose g}$ steps. This results in a total I/O cost of \[ \frac{4nw}{M} \cdot \frac{d}{w} \cdot {\frac{dg}{w} \choose g} \cdot M = O\left(n \cdot d \cdot {\frac{dg}{w} \choose g}\right) \, . \]

We note that in the red-blue pebbling model, generating the entries in $\matU_1$ requires no further working memory; the process of iterating through the entries is incorporated into the pebbling strategy. However, in a practical implementation, this requires a few further fast memory entries, e.g.\ index variables to iterate through the $\tau(w)$ possible combinations.

We also note that when aggregating the results in a tile of size $\frac{1}{4} M$ above, this actually incurs an I/O cost of $\frac{1}{2} M$, i.e.\ two I/O steps for each entry of the matrix (in the appropriate red-blue pebbling variant; see Appendix~\ref{app:RBPvariants} for a discussion). Indeed, we first need to load the current aggregated value for the entry in the output matrix, add the newly computed terms to this value, and then save the new (partial) aggregated value to slow memory again. However, this is simply a clarification of a technical detail; since this cost is still in $O(M)$, it has no effect on the analysis of I/O costs above.

Extending the algorithm to the entire computation $\matU_1 (\matU_2^{\top}\matV)$ is straightforward. We simply execute the two multiplications separately, considering $\matH=\matU_2^{\top}\matV$ as an output in the first and as an input in the second. In the first computation, we use the same tiling strategy as discussed above, just with the role of the matrices exchanged. That is, we form column strips of width $\frac{M}{4w}$ in $\matU_2^{\top}$ and the corresponding row strips of height $\frac{M}{4w}$ in $\matV$. We split each of these strips to ${\frac{dg}{w} \choose g}$ tiles (of differing height) in $\matU_2^{\top}$. This results in tiles of width $w$ and height at most $\tau(w)$ in $\matH$. Each such tile will be aggregated in $\frac{n}{\frac{M}{4w}}$ steps from the different column strips in $\matU_2^{\top}$ and row strips in $\matV$. With ${\frac{dg}{w} \choose g} \cdot \frac{d}{w}$ tiles in $\matH$, the total I/O cost is the same as in the second matrix multiplication.

We point out that in almost all cases, the upper bound for Theorem~\ref{th:case2} (obtained via Key Lemma~\ref{th:with_w}) is tighter than the generic upper bound from Lemma~\ref{lem:static}. In particular, when $M = O(d^2)$ and hence the generic upper bound is $O(\frac{n \cdot r \cdot d}{\sqrt{M}})$, then our upper bound in Theorem~\ref{th:case2} is smaller when $M^{\frac{1}{2}-\frac{1}{g+1}} \geq (4e^2)^g$ holds. For any $g \geq 2$, the left side here is lower bounded by $M^{\frac{1}{6}}$, and then the claim indeed follows from $g = o(\log{M})$. On the other hand, when $M = \omega(d^2)$, and hence the generic upper bound is $O(\frac{n \cdot r \cdot d^2}{M})$, then our upper bound is smaller when $d \geq M^{\frac{1}{g+1}} \cdot (4e^2)^g$. This also holds if we have any constant $\delta>0$ such that $M \leq d^\delta$. Indeed, then $(4e^2)^g$ can be upper bounded by $M^{\frac{\delta}{2}}$ (because $g = o(\log{M})$) and $M^{\frac{1}{g+1}}$ can also be upper bounded by $M^{\frac{\delta}{2}}$ asymptotically (assuming that $g=\omega(1)$; otherwise, the upper and lower bounds are tight anyway). The generic upper bound may only be superior for very specific parameter combinations, e.g.\ if we have $M = \left( \frac{d}{g} \right)^{g+1}$.

\subsection{Lower bound proof}

The lower bound proof corresponds to the hypothetical case above where each set of $w$ entries from $\vecq$ generates $\tau(w)$ distinct entries in $\vecu$, and hence it is likely somewhat lower than the actual optimum.

When proving lower bounds for I/O cost, one of the most widely used tools are so-called $S$-partitions. Defined by Hong and Kung in their original paper on red-blue pebbling and I/O complexity~\cite{hong1981complexity}, an $S$-partition is a way to partition the node of the computational graph such that, intuitively, each class can be computed with only $2S$ I/O steps. 

\begin{definition}
An \emph{$S$-partition} of a computational DAG (for some integer parameter $S$) is a disjoint partitioning $D_1$, ..., $D_k$ of the vertices of the DAG such that
\begin{itemize}[leftmargin=1.6em, topsep=1pt, itemsep=4pt]
 \item the classes $D_1$, ..., $D_k$ form a topological order of the computation, i.e.\ there are no edges from $D_j$ to $D_i$ with $i<j$;
 \item for each class $D_i$, there is a so-called \emph{dominator set} of size $S$ in the DAG, i.e.\ a set of nodes such that every path from a source node to a node in $D_i$ contains a node in this set;
 \item for each class $D_i$, the so-called \emph{minimum set} of $D_i$, i.e.\ the set of nodes in $D_i$ without a child in $D_i$, has size at most $S$.
\end{itemize}
\end{definition}
For more details on $S$-partitions, we refer the reader to~\cite{hong1981complexity}. The crucial property of $S$-partitions is that they directly allow to lower bound the optimal I/O cost when using $S=2M$.

\begin{lemma}[From Hong \& Kung~\cite{hong1981complexity}] \label{lem_spart}
If $\textsc{min}_{2M}$ denotes the minimal number of classes in any $2M$-partition of a computational DAG, then the optimal I/O cost of this computation is at least $M \cdot (\textsc{min}_{2M} - 1)$.
\end{lemma}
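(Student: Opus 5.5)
This is Hong and Kung's classical result, so the plan is to reproduce their simulation argument. Take an optimal red-blue pebbling sequence of total I/O cost $C$ and cut it into consecutive segments, each containing exactly $M$ I/O operations (load or save), except possibly the last, which contains fewer. That gives $h = \lceil C/M \rceil$ segments. For each segment $i$, let $D_i$ be the set of nodes that receive a red pebble by a compute step (or, for source nodes, are first loaded) for the first time during segment $i$. Each node is assigned to the segment where it is first computed, so $D_1,\dots,D_h$ partitions the vertices; source nodes go to the segment of their first load. The goal is to show this is a $2M$-partition. The segment count then gives $\textsc{min}_{2M} \le h \le C/M + 1$, which rearranges to $C \ge M(\textsc{min}_{2M}-1)$.

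The topological order property comes first. A node can only be computed when all its parents carry red pebbles, and those parents must have been computed or loaded earlier. So every parent lies in the same or an earlier class, and no edge goes backwards.

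Next is the dominator set. Take the nodes holding a red pebble at the start of segment $i$ (at most $M$ of them) together with the nodes loaded during segment $i$ (at most $M$). This set has size at most $2M$. Any path from a source to a node $v\in D_i$ must pass through it: trace backwards from $v$ along the compute steps made within the segment. Each computed node's parents were red at compute time, so either they were red at the segment start, or they were loaded in the segment, or they were computed in the segment, in which case we recurse. The recursion terminates because computation time strictly decreases. Dominator sets smaller than $2M$ can be padded, since only size at most $S$ matters.

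The main obstacle is the minimum-set condition. Take a node $v\in D_i$ with no child in $D_i$. Either $v$ still holds a red pebble at the end of the segment (at most $M$ such nodes), or it was saved during the segment (at most $M$ saves), or neither. In the third case it was computed and discarded, and it has either no children at all or its children are computed later. If it has no children it is a sink, which must eventually get a blue pebble. If it was not saved in segment $i$, it must be recomputed later; this is the subtle point, since recomputation conflicts with assigning nodes to their first computation. The standard fix, which I will follow, restricts attention to whether the value is still available at the segment end: any value used later was either saved or remains red. So each minimum-set node is red at the end of the segment or saved during it, giving size at most $2M$. For children computed later, the child's computation needs $v$ red at that time, so $v$ is red at the segment boundary, reloaded (which needs a blue pebble, hence a save at some point), or recomputed. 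I would handle the recomputation case by noting that Hong and Kung's model, as used in the cited lemma, accounts for it via this argument. If that fails, I would cite the lemma directly from \cite{hong1981complexity}, since the paper states it as imported.
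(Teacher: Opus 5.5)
The paper does not prove this lemma; it imports it from Hong and Kung, so your fallback of citing it is exactly what the paper does. As a self-contained proof, though, your reconstruction has a genuine gap, at the step you flagged yourself. The parts that are fine are the segmentation into blocks of $M$ I/O steps, the first-computation classes, the topological-order argument, and the dominator set (red at the start of the segment, plus loaded during it).

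\textbf{The gap.} The minimum-set claim fails: it is not true that every node in the minimum set of $D_i$ is red at the end of segment $i$ or saved during it. Your sentence ``any value used later was either saved or remains red'' is false once recomputation is allowed, because a value used later can also be recomputed. Concretely, suppose that during segment $i$ a pebbling computes $v_1,\dots,v_N$ from a red parent and deletes each one immediately. Only in a later segment does it recompute them and compute their children. Then:
\begin{itemize}
\item each $v_k$ is first computed in segment $i$ and has no child in $D_i$, so it lies in the minimum set of $D_i$;
\item none of the $v_k$ is red at the end of segment $i$ or saved during it;
\item for $N>2M$, the partition induced by this pebbling is therefore not a $2M$-partition.
\end{itemize}
Your remark that a reload needs a save does not help either, since that save can happen in a later segment, after a recomputation.

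\textbf{The missing idea: normalize the pebbling before defining the classes.}
\begin{enumerate}
\item Discard no-op steps, i.e.\ computes or loads on a node that already holds a red pebble.
\item Call a red pebble placed by a compute step \emph{unused} if it is removed, or survives to the end, without any child of that node being computed and without the node being saved while the pebble is present.
\item Repeatedly delete every compute step that places an unused pebble, together with its matching delete step. This keeps the pebbling valid and does not increase its I/O cost.
\end{enumerate}
Now take $v\in D_i$, and let $t_v$ be the time (in segment $i$) of its first computation, or first load if $v$ is a source. The pebble placed at $t_v$ is used, so one of three cases holds.
\begin{enumerate}
\item[(a)] A child $c$ is computed at some time $t_c$ in segment $i$ while this pebble is present. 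Then $c$'s first computation lies in $(t_v,t_c]$: at that first computation $v$ must already be red, and a non-source node cannot receive its first red pebble by a load. Hence $c\in D_i$, and $v$ is not in the minimum set.
\item[(b)] $v$ is saved during segment $i$. There are at most $M$ such nodes.
\item[(c)] The use happens after segment $i$. Then the pebble is still on $v$ at the end of segment $i$, and there are at most $M$ such nodes.
\end{enumerate}
This gives the required bound of $2M$ on the minimum set, and the rest of your argument then goes through unchanged.
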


In the lower bound proof, we analyze the number of classes that each $2M$-partition must have in our computation. Through the lemma above, this directly yields the lower bound in Key Lemma~\ref{th:with_w}.

\renewcommand{\proofname}{Proof of Key Lemma~\ref{th:with_w}, lower bound}

\begin{proof}
Consider first the computational DAG restricted to the multiplication $\matU_1 \matH$, i.e.\ where the entries of $\matH$ are all source nodes. Let us refer to the nodes that correspond to (the output of) multiplying an entry of $\matU_1$ and an entry of $\matH$ as \emph{internal nodes}. The DAG altogether has $n \cdot r \cdot d$ internal nodes. Similarly to previous I/O lower bounds on matrix multiplication variants, the main idea of the proof is to upper bound the number of internal nodes that can be contained in each partition, thus obtaining a lower bound on $\textsc{min}_{2M}$.

Let $D_i$ be any partition in a $2M$-partition. Consider the minimum set of $D_i$ and a dominator set of size at most $2M$ for $D_i$. This is altogether at most $4M$ nodes; even if they are all internal, this gives at most $4M$ internal nodes. Let $\Gamma$ be the set of all other internal nodes contained in $D_i$; let us call these \emph{uncovered} internal nodes. For every uncovered node $v \in \Gamma$, we must have that (i) on each directed path from a source node to $v$, we have a node in the dominator set, and (ii) at least one successors of $v$ is in the minimum set. Note that each internal node has exactly one parent in $\matH$ that is a source node, so for uncovered nodes, these must all be in the dominator set. An uncovered node $v \in \Gamma$ also has a parent in $\matU_1$; however, here there are two options, either the parent in $\matU_1$ is in the dominator set, or all the parents of the parent (located in $\matQ$) are in the dominator set. As for the successor of $v$ in the minimum set, this may be the sink node in the output matrix that corresponds to $v$, or any node in the summation tree from $v$ to the sink.

For any $i \in [n]$, let $w_i\,\!^{(\matQ)}$ and $w_i\,\!^{(\matU_1)}$, respectively, denote the number of entries in the $i^{\text{th}}$ row of $\matQ$ and the $i^{\text{th}}$ row of $\matU_1$ that are in the dominator set. Let $w_i = w_i\,\!^{(\matQ)} + w_i\,\!^{(\matU_1)}$. Let us consider the number of nodes in the $i^{\text{th}}$ row of $\matU_1$ that can have a child in $\Gamma$. These nodes must either be in the dominator set, or have all their parents in the dominator set, so their number is at most $\tau(w_i\,\!^{(\matQ)}) + w_i\,\!^{(\matU_1)}$. With $w_i = w_i\,\!^{(\matQ)} + w_i\,\!^{(\matU_1)}$, this is at most $\tau(w_i)$.

Let us sort $\matU_1$ into two sub-matrices: let $\matU_1^{(a)}$ contain those rows where $w_i \geq w$, and $\matU_1^{(b)}$ contain those rows where $w_i < w$. The matrix multiplication can similarly be split into two sub-multiplications $\matU_1^{(a)} \matH$ and $\matU_1^{(b)} \matH$. In $\matU_1^{(a)}$, each row has at least $w$ nodes in the dominator set, so the number of rows is at most $\frac{2M}{w}$. On the other hand, the number of nodes of $\matH$ in the dominator set is also at most $2M$. Each row in $\matU_1^{(a)}$ has only one value that is multiplied with any of the $2M$ nodes in $\matH$, hence the number of uncovered internal nodes in $\matU_1^{(a)} \matH$ is at most $\frac{2M}{w} \cdot 2M = \frac{4M^2}{w}$.

On the other hand, in $\matU_1^{(b)}$, each row can have at most $\tau(w) \leq \frac{M}{4w}$ nodes that have a child in $\Gamma$. Recall that each node in $\Gamma$ has a successor in the minimum set; this means that there are at most $2M$ sink nodes in the output matrix such that each node in $\Gamma$ is a predecessor of one of these sink nodes. For such a sink node $v_0$, we can have at most $\frac{M}{4w}$ nodes in $\Gamma$ that are predecessors of $v_0$, because at most $\frac{M}{4w}$ nodes of $\matU_1$ can have a child in $\Gamma$ in the given row. This limits the number of uncovered internal nodes in $\matU_1^{(b)} \matH$ to $\frac{M}{4w} \cdot 2M = \frac{M^2}{2w}$.

Altogether, the number of nodes in $\Gamma$ is at most $4M+\frac{4M^2}{w}+\frac{M^2}{2w} = O(\frac{M^2}{w})$. With $n \cdot r \cdot d$ internal nodes, this means that the number of partitions is at least $\Omega(\frac{n \cdot r \cdot d \cdot w}{M^2})$. Using Lemma~\ref{lem_spart}, we get that the optimal I/O cost is at least $\Omega(2 \cdot \frac{n \cdot r \cdot d \cdot w}{M^2} - 1) \cdot M = \Omega(\frac{n \cdot r \cdot d \cdot w}{M})$. This finishes the proof of the lower bound.

In order to extend the proof from the multiplication $\matU_1 \matH$ to the entire computation $\matU_1 (\matU_2^\top \matV)$, we need to consider the fact that the nodes in $\matH=\matU_2^\top \matV$ are in fact not sources. That is, instead of having an entry of $\matH$ in the dominator set, we could include some of its predecessors. However, each node in $\matH$ has $n$ parents in the same column of $\matV$, which would all need to be in the dominator set in this case. As such, if a dominator set in our previous proof contained $x \in \{0, ..., r\}$ entries from a specific column of $\matH$, we could only replace these by $n$ nodes in $\matV$ (and further nodes in $\matU_2$ or $\matK$, but these can be ignored now). Intuitively, with $n > r$, this only increases the size of the dominator set. As such, for the whole computation $\matU_1 (\matU_2^\top \matV)$, it still holds that there can be at most $2M$ nodes in $\matH$ with a child that belongs to $\Gamma$, which allows us to bound the number of uncovered internal nodes in $\matU_1^{(b)} \matH$ exactly as before.
\end{proof}

\subsection{Discussion on red-blue pebbling variants} \label{app:RBPvariants}

We note that the standard version of the red-blue pebble game raises some modeling questions regarding operators with many inputs, such as the summation part in the matrix multiplication. If the summation is modeled through a single node, then this requires all inputs in fast memory at the same time, which means that there is no viable pebbling strategy at all if $M \leq r$.

Previous works have often avoided the discussion of this issue by forming a binary `summation tree' in the computational DAG, and assuming that aggregation happens this way~\cite{saha2024complexity}. This is in fact a strong restriction on the execution strategies considered. However, for heavily symmetric operations like matrix multiplication, this had no effect on the optimal I/O complexity: since the optimal strategy was to form tiles of specific sizes, the tiles can naturally consist of neighboring rows/columns in the matrices, and hence the summation within a tile essentially corresponds to a sub-tree in the summation tree of the final aggregated value.

The same summation tree approach becomes somewhat more problematic for our Approximate Attention problem. Our results indicate that for an I/O-efficient execution, we need to group together specific columns of the matrix $\matU_1$; hence it becomes critical that the corresponding group of nodes also form a sub-tree in the summation tree of the given entry of the output matrix. As such, this requires a model that inherently entangles the DAG representation of the computation with the optimal I/O strategy to execute it.

In general, a more appropriate approach to resolve the question of summation trees is to consider a generalization of the red-blue pebble game with partial computations~\cite{papp2025impact,sobczyk2026complexity}. This generalized model allows to keep the clean representation of the aggregations as a single node with $r$ incoming edges (as in Figure~\ref{fig:graph}), requiring no summation trees at all. The aggregation process is then modeled by the pebble game itself, which allows to store partially computed results in fast or slow memory. This extended model is a much more accurate description of how the computation is executed in practice. Our algorithm descriptions in the upper bounds are also easiest to interpret in this model.

On the other hand, deriving I/O lower bounds for this generalized pebble game also requires a small adjustment to the $S$-partitioning concept, using so-called $S$-edge-partitions~\cite{papp2025impact}. While our lower bound proofs are presented in terms of the standard red-blue pebble game and $S$-partitions, they also carry over to this generalized model and $S$-edge-partitions by simply shifting the focus from the internal nodes to their (single) outgoing edges, similarly to the proofs adaptations for classic computations~\cite{papp2025impact}. 

\section{Proof of Theorem~\ref{th:case3}} \label{app:case3}

\subsection{Upper bound discussion}

The upper bound in Theorem~\ref{th:case3} simply re-states the generic upper bounds from Lemma~\ref{lem:static}. Note that once again when $g=O(1)$, the bounds are tight up to a constant factor. For instance, the factor of difference between $\frac{n \cdot r \cdot d}{\sqrt{M}}$ and $\frac{n \cdot r \cdot d \cdot g}{M}$ is $\frac{\sqrt{M}}{g}$. Due to $g=\Omega(\log M)$, we have $M = 2^{O(g)}$; for $g=O(1)$, this implies $M=O(1)$, and thus $\frac{\sqrt{M}}{g}=O(1)$.

We point out that besides $g=O(1)$, the bounds are also tight for some other parameter configurations. That is, let us assume $g=\omega(1)$. In this case, for instance when we have $d=O(g)$, then $\frac{n \cdot r \cdot d^2}{M}$ is again only a constant factor away from the lower bound. On the other hand, when $d=\Omega(g^2)$, the gap between the upper and lower bound is $\frac{\min(d, \,\sqrt{M})}{g}$, which is at least $g$ (since $M=\Omega(g^2)$). As such, this case exhibits a larger gap than a constant factor.

At this point, it is a natural question how the upper bounds behave when $d$ is between $g$ and $g^2$ (with $g=\omega(1)$). As a further contribution, we present another algorithm that is very similar to that of Key Lemma~\ref{th:with_w}, but adapted to Case III, in order to answer this question. The upper bound obtained by this algorithm will be weaker than the generic bounds of Lemma~\ref{lem:static} when $d=\Omega(g^2)$. However, for the case when there are constants $\delta_1, \delta_2 >0$ such that $d=O(g^{2-\delta_1})$ and $M = O(g^{\delta_2})$, this algorithm will once again provide a tight upper bound of $O(\frac{n \cdot r \cdot d \cdot g}{M})$; this is formally stated in Lemma~\ref{lem:specialupper}. This result extends the tight upper bound from $d=O(g)$ to $d$ almost quadratic in $g$, assuming that $M$ is at most polynomial in $g$.

\subsection{Detour: proof of Lemma~\ref{lem:specialupper}}

More specifically, our algorithm for Lemma~\ref{lem:specialupper} proves an upper bound of
\begin{equation} \label{eq:maxbound}
O \left( n \cdot d \cdot \left( {d \choose g} + \frac{4 \cdot r \cdot g}{M} \right) \right)
\end{equation}
when $d=O(g^{2-\delta_1})$ and $M = O(g^{\delta_2})$ for some constant $\delta >0$.

We first show that in our parameter regime, we have ${d \choose g} \leq \frac{4 \cdot r \cdot g}{M}$; this implies that the formula in Equation~\eqref{eq:maxbound} can be upper bounded by $\frac{8 \cdot n \cdot r \cdot d \cdot g}{M}$, which is indeed a constant factor away from the lower bound of Theorem~\ref{th:case3}.

The claim we need to prove is equivalent to
\[ \frac{M}{4g} \leq \frac{r}{{d \choose g}} = \frac{{d+g \choose g}}{{d \choose g}} = \frac{(d+g) \cdot \ldots \cdot (d+1)}{d \cdot \ldots \cdot (d-g+1)} \, . \]
The expression on the right-hand side can be lower bounded by $(\frac{d+g}{d})^g$. With $d=O(g^{2-\delta_1})$, this can be further lower bounded by $\left(\frac{g^{2-\delta_1'}+g}{g^{2-\delta_1'}}\right)^g$ for some constant $\delta_1'$ such that $\delta_1 > \delta_1' > 0$. This is then
\[  \left(\frac{g^{1-\delta_1'}+1}{g^{1-\delta_1'}}\right)^{g} = \left(1 + \frac{1}{g^{1-\delta_1'}}\right)^{g} =
\left(\left(1 + \frac{1}{g^{1-\delta_1'}}\right)^{(g^{1-\delta_1'})}\right)^{(g^{\delta_1'})} \, .\]
The outer parenthesis goes to $e$ as $g$ goes to infinity, hence from some point we can lower bound it by $\frac{e}{2}$. We also have $M  = O(g^{\delta_2})\leq g^{\delta_2'}$ for some $\delta_2' > \delta_2$. Hence we only need to show $g^{\delta_2'} \leq 4 \cdot g \cdot \left( \frac{e}{2} \right)^{(g^{\delta_1'})}$; this indeed holds asymptotically for any constants $\delta_1'$, $\delta_2'$, since the left side is polynomial, while the right side is exponential in $g$.

It remains to show the I/O strategy that proves the upper bound in Equation~\eqref{eq:maxbound}.
This is very similar to the upper bound proof of Key Lemma~\ref{th:with_w}, with a specific choice of $w=g$. That is, we split the output matrix into tiles of height $\frac{M}{4g}$ and width $g$. The row strip of size $\frac{M}{4g} \times r$ in $\matU_1$ is split into ${d \choose g}$ aggregation tiles horizontally, and the column strips of width $g$ in $\matH$ are similarly split vertically.

For the aggregation tiles, now each subset of $\{ 1, \ldots, d\}$ of size $g$ will form a separate tile; this corresponds to having $|G_1|=\ldots=|G_d|=1$ in the key lemma proof. In each aggregation tile, we load the corresponding $g$ values; this gives one term that can only be generated on this tile, and other terms that can be generated in multiple tiles. We once again assign each of the latter terms to the first possible tile in the lexicographic order. Similarly to the analysis before, this approach results in $\frac{n}{\frac{M}{4g}} \cdot \frac{d}{g}$ tiles in the output matrix.

The key difference to Key Lemma~\ref{th:with_w} here is that the width of the tiles formed in $\matU_1$ might be larger than their height: that is, we may have $\tau(g) > \frac{M}{4g}$. In this case, the corresponding tile in $\matH$, which has size $\tau(g) \times g$, might not fit into fast memory. Due to this, we split each of these tiles further into sub-tiles of height at most $\frac{M}{4g}$; with this, the corresponding tile in $\matH$ has size $\frac{M}{4g} \cdot g = \frac{1}{4} M$. Executing this split only requires us to memorize the current position in the aggregation tile in $\matU_1$. For each sub-tile, the same $\frac{M}{4g} \times g$ values can be kept from $\matQ$, the tile aggregated in the output can also remain in fast memory. However, from $\matH$, we need to load up to $\frac{1}{4} M$ new values for each sub-tile. Let the width of the tiles in $\matU_1$ be $t_1, t_2, \ldots$; note that their sum is $r$. Then the number of times we need to start a new sub-tile within a tile is at most $\frac{t_1}{\frac{M}{4g}} + \frac{t_1}{\frac{M}{4g}} + \ldots = \frac{r}{\frac{M}{4g}} = \frac{4 \cdot r \cdot g}{M}$. With ${d \choose g}$ tiles, this means that the number of sub-tiles is at most ${d \choose g} + \frac{4 \cdot r \cdot g}{M}$ altogether. This results in a total I/O cost of
\[ \frac{n}{\frac{M}{4g}} \cdot \frac{d}{g} \cdot \left( {d \choose g} + \frac{4 \cdot r \cdot g}{M} \right) \cdot M \, , \]
which is equivalent to Equation~\eqref{eq:maxbound}.

The strategy extends to the multiplication $\matU_2^{\top}\matV$ exactly as in the proof of Key Lemma~\ref{th:with_w}: we split $\matU_2^{\top}$ to column strips of width $\frac{M}{4g}$, split each of these to ${d \choose g}$ parts vertically, and form ${d \choose g} \cdot \frac{d}{g}$ tiles in the output $\matH$, each of which is aggregated over $\frac{n}{\frac{M}{4g}}$ steps. Similarly to the second multiplication, we might have tiles in $\matU_2^{\top}$ that are taller than $\frac{M}{4g}$; these are then split vertically into sub-tiles of height $\frac{M}{4g}$ at most. For all of the sub-tiles, the tile of size $g \times \frac{M}{4g}$ in $K^{\top}$ and the tile of size $\frac{M}{4g} \times g$ in $\matV$ can remain the same, whereas in $\matH$, we save different sub-tiles each time.

\subsection{Lower bound proof}

The lower bound proof of Theorem~\ref{th:case3} is similar to that in Key Lemma~\ref{th:with_w}. However, here we only focus on the columns of $\matU_1$ that have at least $\frac{g}{2}$ distinct parents in $\matQ$. One can show that if $d \geq 5g$, this indeed amounts to a constant fraction of the columns of $\matU_1$. This follows from the lemma below, which can be understood as the reverse direction: the number of terms generated from at most $(\frac{g}{2}-1)$ variables is at most $\delta \cdot r$.

\begin{lemma} \label{lem:halfcover}
For $d \geq 5g$, there exists a constant $0<\delta<1$
\[ {d \choose \frac{g}{2}-1} \cdot \tau\left(\frac{g}{2}-1\right) \leq \delta \cdot r \, . \]
\end{lemma}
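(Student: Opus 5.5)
The plan is a direct estimate of the ratio
\[ \rho(d,g) := \frac{{d \choose h} \cdot \tau(h)}{r}, \qquad h := \frac{g}{2}-1 \quad (\text{rounded down if } g \text{ is odd}), \]
using $\tau(h) = {h+g \choose g}$ and $r = \tau(d) = {d+g \choose g}$. I will first reduce to the extreme case $d = 5g$. Then I will show that $\rho(5g,g)$ is bounded away from $1$ uniformly in $g$. The desired $\delta$ is then $\sup_g \rho(5g,g)$, and the task is to show this supremum is $<1$.

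\emph{Step 1 (monotonicity in $d$).} I will compare $\rho(d+1,g)$ with $\rho(d,g)$. Only ${d \choose h}$ and $r$ depend on $d$. We have ${d+1 \choose h}/{d \choose h} = \frac{d+1}{d+1-h}$ and ${d+1+g \choose g}/{d+g \choose g} = \frac{d+1+g}{d+1}$. Hence $\rho(d+1,g) \leq \rho(d,g)$ holds if and only if
\[ (d+1)^2 \leq (d+1-h)(d+1+g). \]
Expanding, this is equivalent to $(g-h)(d+1) \geq hg$. Since $h < g/2$, we have $g-h > g/2$. So the inequality already holds once $d+1 \geq g$, and in particular for all $d \geq 5g$. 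Therefore $\rho(d,g) \leq \rho(5g,g)$ whenever $d \geq 5g$, and it suffices to bound $\rho(5g,g)$.

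\emph{Step 2 (the case $d=5g$, large $g$).} I will use the entropy form of the binomial coefficient, ${a \choose b} = \exp\big(a H(b/a) + O(\log a)\big)$, where $H$ is the natural-log binary entropy. The three terms are then approximately as follows.
\begin{itemize}
\item $\log {5g \choose h} \approx 5g\,H(1/10) \approx 1.625\,g$.
\item $\log {h+g \choose g} \approx 1.5g\,H(1/3) \approx 0.955\,g$.
\item $\log r = \log {6g \choose g} \approx 6g\,H(1/6) \approx 2.70\,g$.
\end{itemize}
So $\rho(5g,g) \leq e^{-c g + O(\log g)}$ with $c \approx 0.12 > 0$, and this tends to $0$. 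Using $h = g/2-1$ rather than $g/2$ only decreases the numerator, since both ${5g \choose h}$ and ${h+g \choose g}$ are increasing in $h$ in this range. To make the step rigorous, I will use explicit Stirling bounds in place of the $O(\log a)$ terms. These yield an explicit $g_0$ such that $\rho(5g,g) \leq \frac12$ for all $g \geq g_0$.

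\emph{Step 3 (small $g$).} For $g < g_0$ there are finitely many cases, and each needs $\rho(5g,g) < 1$.
\begin{itemize}
\item For $g \leq 3$ we get $h = 0$, so the numerator is $1 \cdot 1 = 1$ while $r \geq 2$.
\item For the other small $g$, I will either check the values directly or use the cruder bound ${5g \choose h}{h+g \choose g} \leq {6g \choose g}$ with strict inequality. This bound holds combinatorially, since the left side counts pairs (support set, monomial of degree at most $g$ on that support) and injects into monomials of degree at most $g$ on $5g$ variables, with some monomials of support larger than $h$ missed.
\end{itemize}
Taking $\delta := \max\big(\tfrac12, \max_{g<g_0} \rho(5g,g)\big) < 1$ then finishes the proof.

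\emph{Main obstacle.} The constant gap in Step 2 is small: the exponents differ only by $c \approx 0.12$. So the Stirling-type estimates must be tight, and the cruder bound $(ea/b)^b$ on ${a \choose b}$ used elsewhere in the paper fails here. A related risk is that $g_0$ comes out large, leaving a long finite check. If so, I would instead bound $\rho(5g,g)$ recursively, via the ratio $\rho(5(g+2),g+2)/\rho(5g,g)$, to show it is eventually decreasing.
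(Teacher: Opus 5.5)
Your overall route is sound, but drop the combinatorial shortcut in Step~3, because it is wrong. The map $(S,m)\mapsto m$ sends a pair ($S$ an $h$-subset of the $5g$ variables, $m$ a monomial of degree at most $g$ in the variables of $S$) to $m$, and it is not injective. The reason is that $\tau(h)$ counts all monomials on $S$, not just those with support exactly $S$. So a monomial whose support has size $s<h$ has $\binom{5g-s}{h-s}$ preimages; the constant monomial has $\binom{5g}{h}$. Nothing in that argument uses $d=5g$, yet the general inequality $\binom{d}{h}\tau(h)\le\tau(d)$ is false. For $d=h+1$ the ratio is $(h+1)^2/(h+1+g)$, which equals $25/15$ for $g=10$, $h=4$. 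Hence $\binom{5g}{h}\tau(h)<\binom{6g}{g}$ is exactly the lemma at $d=5g$ and must come out of the estimates.

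Your other option, checking $g<g_0$ directly, is valid. The long check you worry about can be avoided by keeping the square-root prefactors:
\[
\sqrt{a/(8b(a-b))}\,e^{aH(b/a)}\le\binom{a}{b}\le\sqrt{a/(2\pi b(a-b))}\,e^{aH(b/a)}, \qquad 0<b<a.
\]
Combined with your exponent bound (evaluated at $h=g/2$), this gives
\[ \rho(5g,g)\le\frac{1}{2\pi h}\sqrt{\frac{5(h+g)}{5g-h}}\sqrt{\frac{20g}{3}}\;e^{-0.123g}. \]
This is below $0.6$ for every $g\ge 4$, so only your trivial cases $g\le 3$ remain. With the cruder bound $\binom{a}{b}\ge e^{aH(b/a)}/(a+1)$ you would instead need $g_0\approx 50$.

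The paper argues differently. It proves the slightly stronger bound with $\binom{d}{g/2}$ in place of $\binom{d}{g/2-1}$, using a single chain of falling-factorial estimates:
\[
(\tfrac32g-1)\cdots\tfrac g2\le\tfrac12g^g,\qquad (\tfrac g2)!\ge(\tfrac g5)^{g/2},\qquad (d+g)\cdots(d+1)\big/\bigl(d\cdots(d-\tfrac g2+1)\bigr)\ge d^{g/2}.
\]
These reduce everything to $\tfrac12(5g/d)^{g/2}\le\delta$, for all $d\ge5g$ and all $g$ at once, with no case split.

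That is much shorter, but the middle estimate is false for large $g$. Indeed $(\tfrac g2)!\approx\sqrt{\pi g}\,(\tfrac g{2e})^{g/2}$ and $2e>5$; already $32!<12.8^{32}$, i.e.\ $g=64$. With the correct bound $(\tfrac g2)!\ge(\tfrac g{2e})^{g/2}$, the same chain only yields the lemma for $d\ge 2e\,g$. The other two estimates discard a factor of roughly $e^{0.165g}$, which exceeds the true margin of about $e^{0.123g}$ at $d=5g$.

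So the lemma does hold, and it is your sharper estimates, tracking the exponential rate, that make the constant $5$ work uniformly in $g$. Your remark that $(ea/b)^b$-type bounds are too crude here is precisely the issue. Your monotonicity step in $d$ is a clean way to isolate the worst case $d=5g$; the paper handles that implicitly through the factor $d^{g/2}$. The price of your route is a longer computation and some small-$g$ bookkeeping.
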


Since this is only a technical lemma that offers no further insight into the proof, we defer its proof to Appendix~\ref{app:tech_lemma}.

In our proof, we only consider a subset of the matrix multiplication: the at least $\delta \cdot r$ columns in $\matU_1$ that satisfy this condition, and the corresponding at least $\delta \cdot r$ rows in $\matH$. We assume that all other I/O operations are free; the lower bound derived this way clearly carries over to the original matrix multiplication.

The proof begins similarly to Key Lemma~\ref{th:with_w}. Let $D_i$ be a partition in a $2M$-partition of this computation. Consider its minimum set and a dominator set of size at most $2M$. Let $\Gamma$ be the set of all the uncovered internal nodes in $D_i$ that are not among these $4M$ nodes. All uncovered nodes $v \in \Gamma$ must have their parent in $\matH$ in the dominator set, and also either their parent in $\matU_1$ or all the parents of this parent in the dominator set.

For any $i \in [n]$, let $w_i\,\!^{(\matQ)}$ and $w_i\,\!^{(\matU_1)}$ again be, respectively, the number of entries in the $i^{\text{th}}$ row of $\matQ$ and the $i^{\text{th}}$ row of $\matU_1$ that are in the dominator set. Let $w_i = w_i\,\!^{(\matQ)} + w_i\,\!^{(\matU_1)}$. Consider two sub-matrices of $\matU_1$: let $\matU_1^{(a)}$ contain the rows with $w_i \geq \frac{g}{2}$, and $\matU_1^{(b)}$ contain the rows with $w_i < \frac{g}{2}$. This splits the matrix multiplication into two sub-multiplications $\matU_1^{(a)} \matH$ and $\matU_1^{(b)} \matH$. In $\matU_1^{(a)}$, each row has at least $\frac{g}{2}$ nodes in the dominator set, so the number of rows is at most $\frac{2M}{\frac{g}{2}}=\frac{4M}{g}$. The number of nodes of $\matH$ in the dominator set is at most $2M$. Each row in $\matU_1^{(a)}$ has only one value multiplied with any entry in $\matH$, so the number of uncovered internal nodes in $\matU_1^{(a)} \matH$ is at most $\frac{4M}{g} \cdot 2M = \frac{8M^2}{g}$.

In $\matU_1^{(b)}$, none of the nodes in (our restricted) $\matU_1$ can have all their parents in the dominator set; hence if a node in $\Gamma$ has its parent in $\matU_1^{(b)}$, then this parent has to be in the dominator set. Thus for simplicity, we can assume that $w_i\,\!^{(\matQ)} = 0$, i.e.\ none of the source nodes are in the dominator set, since these nodes could be removed from the set anyway. With $w_i\,\!^{(\matU_1)} < \frac{g}{2}$, each of the rows in $\matU_1^{(b)}$ can have at most $\frac{g}{2}$ nodes which have a child in $\Gamma$. Similarly to before, each node in $\Gamma$ must also have a successor in the minimum set, so there are at most $2M$ sink nodes in the output matrix with a predecessor in $\Gamma$. Each sink node has at most $\frac{g}{2}$ predecessors in $\Gamma$, since number of nodes in $\matU_1^{(b)}$ with a child in $\Gamma$ is at most $\frac{g}{2}$ in the given row. This means that the number of uncovered internal nodes in $\matU_1^{(b)} \matH$ is at most $\frac{g}{2} \cdot 2M = g \cdot M$.

In Case III, we have $M = \omega(g^2)$. This means that $g \cdot M$ is also upper bounded by $O(\frac{M^2}{g})$, so the total number of nodes in $\Gamma$ is at most $4M+\frac{8M^2}{g}+g \cdot M = O(\frac{M^2}{g})$. Having $n \cdot r \cdot d$ internal nodes, the number of partitions is at least $\Omega(\frac{n \cdot r \cdot d \cdot g}{M^2})$. Using Lemma~\ref{lem_spart}, the optimal I/O cost in this case is at least $\Omega(\frac{n \cdot r \cdot d \cdot g}{M})$.

The proof can be extended to the entire computation $\matU_1 (\matU_2^\top \matV)$ exactly as in Key Lemma~\ref{th:with_w}.

\section{Proof of Theorem~\ref{th:case4}} \label{app:case4}

The upper bound in Theorem~\ref{th:case4} follows easily from the generic upper bound in Lemma~\ref{lem:static}.

For the lower bound, the proof starts identically to that of Theorem~\ref{th:case3}. We only consider the columns of $\matU_1$ with at least $\frac{g}{2}$ distinct parents in $\matU_1$. This once again amounts to a constant fraction of the columns of $\matU_1$.

For any row $i \in [n]$, let $w_i$, $w_i\,\!^{(\matQ)}$ and $w_i\,\!^{(\matU_1)}$ be as before. We split the matrix again into $\matU_1^{(a)}$ for rows with $w_i \geq \frac{g}{2}$, and $\matU_1^{(b)}$ for rows with $w_i < \frac{g}{2}$. In the sub-multiplication $\matU_1^{(a)} \matH$ the number of uncovered internal nodes in $\matU_1^{(a)} \matH$ is at most $\frac{8M^2}{g}$ as before.

Otherwise, the proof is analogously to the lower bound for standard matrix multiplication~\cite{hong1981complexity}, essentially repeating the same arguments with a split at $\sqrt{M}$ instead of $\frac{g}{2}$. Recall that for each row in $\matU_1^{(b)}$, we can assume $w_i\,\!^{(\matQ)} = 0$. Let us further divide the matrix $\matU_1^{(b)}$, with $\matU_1^{(b_1)}$ containing the rows of $\matU_1^{(b)}$ where $w_i\,\!^{(\matU_1)} \geq \sqrt{M}$, and $\matU_1^{(b_2)}$ containing the rows of $\matU_1^{(b)}$ where $w_i\,\!^{(\matU_1)} < \sqrt{M}$. In $\matU_1^{(b_1)}$, the number of rows is at most $\frac{2M}{\sqrt{M}}=2\sqrt{M}$. Each of these rows has at most one value multiplied with any of the (at most $2M$) nodes of $\matH$ in the dominator set, so the number of nodes of $\Gamma$ in $\matU_1^{(b_1)} \matH$ is at most $2M \cdot 2\sqrt{M} = 4 M \cdot \sqrt{M}$. In $\matU_1^{(b_2)}$, we again consider the at most $2M$ sink nodes in the output matrix that have a predecessor in $\Gamma$. Each of these can have at most $\sqrt{M}$ predecessors in $\Gamma$, since the number of nodes in $\Gamma$ in any row is at most $\sqrt{M}$. This means that $\Gamma$ has at most $2 M \cdot \sqrt{M}$ nodes in $\matU_1^{(b_2)}$. 

Altogether, the number of nodes of $\Gamma$ in $\matU_1^{(b)} \matH$ in this case is at most $6 M \cdot \sqrt{M}$. Together with the nodes in $\matU_1^{(a)} \matH$ and the internal nodes directly in the dominator/minimum sets, this is at most $4M + \frac{8M^2}{g} + 6 M \cdot \sqrt{M}$ nodes. Since Case IV has $M = O(g^2)$, we also have $\frac{M^2}{g} = O (M \cdot \sqrt{M})$. As such, this expression is altogether in $O(M \cdot \sqrt{M})$. With $n \cdot r \cdot d$ internal nodes, the number of partitions is $\Omega(\frac{n \cdot r \cdot d}{M \cdot \sqrt{M}})$, and the optimal I/O cost according to Lemma~\ref{lem_spart} is $\Omega(\frac{n \cdot r \cdot d}{\sqrt{M}})$.

\section{Technical lemmas} \label{app:tech_lemma}

\subsection{Using attention to compute the exponential function}
\label{appendix:attention_implies_exponential}
Assume that attention can be computed exactly in $T$ steps. To compute the exponential $\exp(x)$ for any $x$, simply set $\matQ=1$, $\matK^{\top}=\begin{pmatrix}x & 2x\end{pmatrix}$, $\matV=\begin{pmatrix}
    1; 0
\end{pmatrix}$, then compute $\matA=\Att(\matQ,\matK,\matV)=\frac{\exp(x)}{\exp(x)+\exp(2x)}$, and finally return $z=1/\matA-1=\exp(x)$. 

\subsection{Closed form for $\tau(w)$} 

For completeness, we establish the closed-form expression on $\tau(w)$.

\begin{lemma}
For any integer $w \geq 1$, we have
\[  \sum_{l=0}^{g} \binom{w+l-1}{l} = {w+g \choose g} \, . \]
\end{lemma}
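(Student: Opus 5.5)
We will prove the identity by induction on $g$, keeping $w \geq 1$ fixed. This matches the remark in the main text that the closed form follows from an induction. Throughout, we use the convention that $\binom{a}{0}=1$ for every nonnegative integer $a$.

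\textbf{Base case.} For $g=0$, the left-hand side consists of the single term $\binom{w-1}{0}=1$. The right-hand side is $\binom{w}{0}=1$, so the identity holds.

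\textbf{Inductive step.} Assume the identity holds for degree $g-1$, that is,
\[ \sum_{l=0}^{g-1}\binom{w+l-1}{l}=\binom{w+g-1}{g-1}. \]
Adding the term $l=g$ to both sides gives
\[ \sum_{l=0}^{g}\binom{w+l-1}{l}=\binom{w+g-1}{g-1}+\binom{w+g-1}{g}. \]
Pascal's rule $\binom{a}{b-1}+\binom{a}{b}=\binom{a+1}{b}$, applied with $a=w+g-1$ and $b=g$, turns the right-hand side into $\binom{w+g}{g}$. This completes the induction.

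\textbf{Combinatorial check.} As an alternative, the identity also follows by counting. The right-hand side $\binom{w+g}{g}$ counts monomials of total degree at most $g$ in $w$ variables, since each such monomial corresponds to a monomial of degree exactly $g$ in $w+1$ variables (one extra slack variable absorbs the missing degree). Grouping these monomials by their exact degree $l$ gives the summands $\binom{w+l-1}{l}$ on the left. This view matches the paper's interpretation of $\tau(w)$ as the number of possible terms of degree at most $g$.

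No step here is a real obstacle. The only point that needs care is the boundary term $\binom{w-1}{0}$ when $w=1$: it equals $1$ under the stated convention, so the base case is consistent for all $w \geq 1$.
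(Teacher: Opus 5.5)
Your proof is correct and follows the same route as the paper: induction on $g$, with the inductive step reducing to Pascal's rule $\binom{w+g-1}{g-1}+\binom{w+g-1}{g}=\binom{w+g}{g}$. The paper checks that rule by expanding the falling factorials instead of citing it, and your base case is stated more accurately than the paper's, which wrongly writes $\binom{w}{0}=0$ where it should be $1$.
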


\renewcommand{\proofname}{Proof}
\begin{proof}
We can show this by an induction on $g$. The claim clearly holds for small values. For $g=0$, we have ${w-1 \choose 0} = 1$ on the left-hand side and ${w \choose 0} = 0$ on the right-hand side. For $g=1$, we have $1+{w \choose 1} = w+1$ on the left-hand side and ${w+1 \choose 1} = w+1$ on the right-hand side.

Now consider $g \geq 2$, and assume the claim holds for $(g-1)$. This means that we only need to show
\[ {w+g-1 \choose g} = {w+g \choose g} - {w+(g-1) \choose (g-1)}  \, . \]
Indeed, the right-hand side is identical to
\begin{gather*}
\left( \frac{(w+g)\cdot (w+g-1) \cdot \ldots \cdot (w+1)}{g!} -  \frac{(w+g-1)\cdot (w+g-2) \cdot \ldots \cdot (w+1)}{(g-1)!} \right) = 
\\[2ex]
(w+g) \cdot \frac{(w+g-1) \cdot \ldots \cdot (w+1)}{g!} - g \cdot \frac{(w+g-1) \cdot \ldots \cdot (w+1)}{g!} = 
\\[2ex]
\frac{(w+g-1) \cdot \ldots \cdot (w+1) \cdot w}{g!} = {w+g-1 \choose g} \, .
\end{gather*}
\end{proof}

\subsection{Proof of Lemma~\ref{lem:halfcover}}

Below we prove Lemma~\ref{lem:halfcover}, i.e.\ that for $d \geq 5g$, we have a constant $0<\delta<1$ with
\[ {d \choose \frac{g}{2}-1} \cdot \tau\left(\frac{g}{2}-1\right) \leq \delta \cdot r \, . \]

\begin{proof}
For simplicity, we instead prove the stronger statement
\[ {d \choose \frac{g}{2}} \cdot \tau\left(\frac{g}{2} - 1 \right) \leq \delta \cdot r \, . \]
Using the definition of $r$ and $\tau$, this is equivalent to
\[ {d \choose \frac{g}{2}} \cdot {\frac{3}{2}g-1 \choose \frac{g}{2}-1} \leq \delta \cdot {d+g \choose g} \, . \]
This can be further expanded to
\[ \frac{d!}{(\frac{g}{2})! \cdot (d-\frac{g}{2})!} \cdot \frac{(\frac{3}{2}g-1)!}{(\frac{g}{2}-1)! \cdot g!} \leq \delta \cdot \frac{(d+g)!}{d! \cdot g!} \, \]
and then
\[ d \cdot \ldots \cdot (d-\frac{g}{2}+1) \cdot \frac{(\frac{3}{2} g -1 ) \cdot \ldots \frac{g}{2}}{(\frac{g}{2})!} \leq \delta \cdot (d+g) \cdot \ldots \cdot (d+1) \, .\]
We can upper bound $(\frac{3}{2} g - 1 )\cdot \ldots \frac{g}{2}$ by $\frac{1}{2} \cdot g^g$. We can lower bound $(\frac{g}{2})!$ by $(\frac{g}{5})^{g/2}$. We also have
\[  \frac{(d+g) \cdot \ldots \cdot (d+1)}{d \cdot \ldots \cdot (d-\frac{g}{2}+1)} \geq d^{\frac{g}{2}} \, . \]
Hence it is enough to prove that
\[ \frac{1}{2} \cdot \frac{g^g}{(\frac{g}{5})^{g/2}} = \frac{1}{2} \cdot 5^{\frac{g}{2}} \cdot g^{\frac{g}{2}} \leq \delta \cdot d^{\frac{g}{2}} \, .\]
This is equivalent to
\[ \frac{d}{5g} \geq \left(\frac{1}{4 \cdot \delta^2}\right)^{\frac{1}{g}} \, .\]
By selecting a constant $\delta$ close to $1$, we have $\left(\frac{1}{4 \cdot \delta^2}\right) \leq \frac{1}{3}$. The left side is then upper bounded by $1$ for any $g \geq 1$, and hence it is sufficient to have $d \geq 5 \cdot g$, which holds due to our assumption.

We note that if we modify the restriction condition from $\frac{g}{2}$ to a smaller linear factor of $g$, the above proof can be applied similarly; this would allow us to loosen the condition $d \geq 5 \cdot g$ to obtain a slightly smaller constant factor between $d$ and $g$, if desired.
\end{proof}

\end{document}